\newtheorem{theorem}{Theorem}
\newtheorem{proposition}{Proposition}
\newcommand{\algAgn}{\textsc{Agnostic}\xspace}
\newcommand{\algCon}{\textsc{Conserv}\xspace}
\newcommand{\algAwareCMDP}{\textsc{Aware-CMDP}\xspace}
\newcommand{\algAwareBiLevel}{\textsc{Aware-BiL}\xspace}
\newcommand{\algAdAwareGreedy}{\textsc{AdAware-Vol}\xspace}
\newcommand{\algAdAwareLine}{\textsc{AdAware-Lin}\xspace}
\newcommand{\algLineSearch}{\textsc{LineSearch}\xspace}
\newcommand{\R}{\mathbb{R}}
\newcommand{\E}{\mathbb{E}}
\newcommand{\wopt}{\ensuremath{\mathbf{w}_r^*}}
\DeclareMathOperator*{\argmin}{arg\,min}
\DeclareMathOperator*{\argmax}{arg\,max}
\DeclareMathOperator\diam{diam}
\newcommand{\abs}[1]{\left\vert#1\right\vert}
\def \argmax {\mathop{\rm arg\,max}}
\def \argmin {\mathop{\rm arg\,min}}
\newcommand{\hard}{\ensuremath{\textnormal{hard}}}
\newcommand{\soft}{\ensuremath{\textnormal{soft}}}
\newcommand{\up}{\ensuremath{\textnormal{up}}}
\newcommand{\low}{\ensuremath{\textnormal{low}}}
\newcommand{\Rmnum}[1]{\expandafter\@slowromancap\romannumeral #1@}
\newcommand{\learner}{\mathsf{L}}
\newcommand{\teacher}{\mathsf{T}}
\title{Learner-aware Teaching: Inverse Reinforcement Learning with Preferences and Constraints}
\author{
    Sebastian Tschiatschek\thanks{Authors contributed equally to this work.}\\
    Microsoft Research\\
    \texttt{setschia@microsoft.com}
    \And
    Ahana Ghosh\footnotemark[1]\\
    MPI-SWS\\
    \texttt{gahana@mpi-sws.org}
    \And
    Luis Haug\footnotemark[1]\\
    ETH Zurich\\
    \texttt{lhaug@inf.ethz.ch}
    \AND
    Rati Devidze\\
    MPI-SWS\\
    \texttt{rdevidze@mpi-sws.org}
    \And
    Adish Singla\\
    MPI-SWS\\
    \texttt{adishs@mpi-sws.org}
}
\begin{document}
\maketitle

\newtoggle{longversion}
\settoggle{longversion}{true}
\begin{abstract}
Inverse reinforcement learning (IRL) enables an agent to learn complex behavior by observing demonstrations from a (near-)optimal policy. The typical assumption is that the learner's goal is to match the teacher’s demonstrated behavior.
In this paper, we consider the setting where the learner has its own preferences that it additionally takes into consideration. These preferences can for example capture behavioral biases, mismatched worldviews, or physical constraints. We study two teaching approaches: \emph{learner-agnostic} teaching, where the teacher provides demonstrations from an optimal policy ignoring the learner's preferences, and \emph{learner-aware} teaching, where the teacher accounts for the learner’s preferences. We design learner-aware teaching algorithms and show that significant performance improvements can be achieved over learner-agnostic teaching.
\end{abstract}
\vspace{-2mm}
\section{Introduction}\label{sec:intro}
Inverse reinforcement learning (IRL) enables a learning agent (\emph{learner}) to acquire skills from observations of a \emph{teacher}'s 
demonstrations. 
The learner infers a
reward function explaining the demonstrated behavior and optimizes its own behavior accordingly. 
IRL has been studied extensively~
\cite{abbeel2004apprenticeship,ratliff2006maximum,ziebart2010modeling,boularias2011relative,osa2018algorithmic} under the premise that the learner can and is willing to imitate the teacher's behavior. 

In real-world settings, however, a learner typically does not blindly follow the teacher's demonstrations, but also has its own preferences and constraints.
For instance, 
consider demonstrating to an auto-pilot of a self-driving car how to navigate from A to B by taking the most fuel-efficient route. These demonstrations might conflict with the preference of the auto-pilot to drive on highways in order to ensure maximum safety.
Similarly, in robot-human interaction with the goal of teaching people how to cook, a teaching robot might demonstrate to a human user how to cook ``roast chicken'', which could conflict with the preferences of the learner who is ``vegetarian''.
To give yet another example, consider a surgical training simulator which provides virtual demonstrations of expert behavior; a novice learner might not be confident enough to imitate a difficult procedure because of safety concerns.
In all these examples, the learner might not be able to acquire useful skills from the teacher's demonstrations.

In this paper, we formalize the problem of teaching a learner with preferences and constraints.
First, we are interested in understanding the suboptimality of \emph{learner-agnostic} teaching, i.e., ignoring the learner's preferences. Second, we are interested in designing \emph{learner-aware} teachers who account for the learner's preferences and thus enable more efficient learning.
To this end, we study a learner model with preferences and constraints in the context of the Maximum Causal Entropy (MCE) IRL framework~\cite{ziebart2010modeling,ziebart2013principle,zhou2018mdce}.
This enables us to formulate the teaching problem as an optimization problem, and to derive and analyze algorithms for learner-aware teaching.
Our main contributions are:
\begin{enumerate}[I]
\item We formalize the problem of IRL under preference constraints (Section~\ref{sec.model} and Section~\ref{sec:learner}).
\item We analyze the problem of optimizing demonstrations for the learner when preferences are  \emph{known} to the teacher, and we propose a bilevel
    optimization approach to the problem (Section~\ref{sec:teacher.fullknowledge}).
\item We propose strategies for adaptively teaching a
    learner with  preferences \emph{unknown} to the teacher, and we provide theoretical guarantees under natural assumptions (Section \ref{sec:teacher.unknown}).
\item We empirically show that significant performance improvements can be achieved by learner-aware teachers as compared to learner-agnostic teachers (Section \ref{sec:experiments}).
\end{enumerate}

\section{Problem Setting}\label{sec.model}
\textbf{Environment.}
Our environment is described by a \emph{Markov decision process} (MDP) $\mathcal{M} := (\mathcal{S},\mathcal{A},T,\gamma,P_0,R)$. Here $\mathcal{S}$ and $\mathcal{A}$ denote finite sets of states and actions. $T\colon \mathcal{S} \times \mathcal{S} \times \mathcal{A} \rightarrow [0,1]$ describes the state transition dynamics, i.e., $T(s' | s,a)$ is the probability of landing in state $s'$ by taking action $a$ from state $s$. $\gamma \in (0,1)$ is the discounting factor. $P_0: \mathcal{S} \rightarrow [0,1]$ is an initial distribution over states. 
$R: \mathcal{S} \rightarrow \R$ is the reward function. We assume that there exists a feature map $\phi_r \colon \mathcal{S} \rightarrow [0, 1]^{d_r}$ such that the reward function is linear, i.e.,
$R(s) = \langle \wopt, \phi_r(s) \rangle$ for some $\wopt \in \R^{d_r}$. Note that a bound of $\Vert \wopt \Vert_1 \leq 1$ ensures that $\abs{R(s)} \leq 1$ for all $s$.


\textbf{Basic definitions.} A \emph{policy} is a map $\pi: \mathcal{S} \times \mathcal{A} \rightarrow [0,1]$ such that $\pi(\;\cdot \mid s)$ is a probability distribution over actions for every state $s$. We denote by $\Pi$ the set of all such policies.  The performance measure for policies we are interested in is the \emph{expected discounted reward} $R(\pi) := \E \left(\sum_{t=0}^\infty \gamma^t R(s_t)\right)$, where the expectation is taken with respect to the distribution over trajectories $\xi = (s_0, s_1, s_2, \ldots)$ induced by $\pi$ together with the transition probabilities $T$ and the initial state distribution $P_0$. A policy $\pi$ is \emph{optimal} for the reward function $R$ if $\pi \in \argmax_{\pi' \in \Pi} R(\pi')$, and we denote an optimal policy by $\pi^*$. 
Note that $R(\pi) = \langle \wopt, \mu_r(\pi) \rangle$, where $\mu_r\colon \Pi \to \R^{d_r}$, $\pi \mapsto \E \left( \sum_{t=0}^\infty\gamma^t \phi_r(s_t)\right)$, is the map taking a policy to its vector of \emph{(discounted) feature expectations}. We denote by $\Omega_r = \{\mu_r(\pi): \pi \in \Pi\}$ the image $\mu_r(\Pi)$ of this map.  Note that the set $\Omega_r \in \R^{d_r}$ is convex (see~\cite[Theorem 2.8]{ziebart2010modeling} and \cite{abbeel2004apprenticeship}), and also bounded due to the discounting factor $\gamma \in (0,1)$.
For a finite collection of trajectories $\Xi = \{s^i_0, s^i_1, s^i_2, \ldots \}_{i=1, 2,\dots}$ obtained by executing a policy $\pi$ in the MDP $\mathcal{M}$, we denote the empirical counterpart of  $\mu_r(\pi)$ by $\hat{\mu}_r(\Xi) := \frac{1}{\abs{\Xi}}\sum_{i}  \sum_{t} {\gamma^t \phi_r(s^{i}_t)}$.

\textbf{An IRL learner and a teacher.} We consider a learner $\learner$ implementing an inverse reinforcement learning (IRL) algorithm and a teacher $\teacher$. The teacher has access to the full MDP $\mathcal{M}$; the learner knows the MDP and the parametric form of reward function $R(s) = \langle \mathbf{w}_r, \phi_r(s) \rangle$ but does not know the true reward parameter $\wopt$. The learner, upon receiving demonstrations from the teacher, outputs a policy $\pi^{\learner}$ using its algorithm.   The teacher's objective is to provide a set of demonstrations $\Xi^{\teacher}$ to the learner that ensures that the learner's output policy $\pi^{\learner}$ achieves high reward $R(\pi^{\learner})$.  

The standard IRL algorithms are based on the idea of \emph{feature matching}~\cite{abbeel2004apprenticeship,ziebart2010modeling,osa2018algorithmic}: The learner's algorithm finds a policy $\pi^{{\learner}}$ that matches the feature expectations of the received demonstrations, ensuring that $\Vert \mu_r(\pi^{{\learner}}) - \hat{\mu}_r(\Xi^{{\teacher}}) \Vert_2 \leq \epsilon$ where $\epsilon$ specifies a desired level of accuracy. In this standard setting, the learner's primary goal is to imitate the teacher (via feature matching) and this makes the teaching process easy.  In fact, the teacher just needs to provide a sufficiently rich pool of demonstrations $\Xi^{{\teacher}}$ obtained by executing $\pi^*$, ensuring $\Vert \hat{\mu}_r(\Xi^{{\teacher}}) - \mu_r(\pi^*) \Vert_2 \leq \epsilon$.  This guarantees that $\Vert \mu_r(\pi^{{\learner}}) - \mu_r(\pi^*) \Vert_2 \leq 2 \epsilon$. Furthermore, the linearity of rewards and $\Vert \wopt \Vert_1 \leq 1$ ensures that the learner's output policy $\pi^{{\learner}}$ satisfies $R(\pi^{{\learner}}) \geq  R(\pi^*) - 2 \epsilon$.


\begin{figure*}[t!]
\centering
	\begin{subfigure}[b]{0.3\textwidth}
	   \centering
		\includegraphics[height=2.4cm]{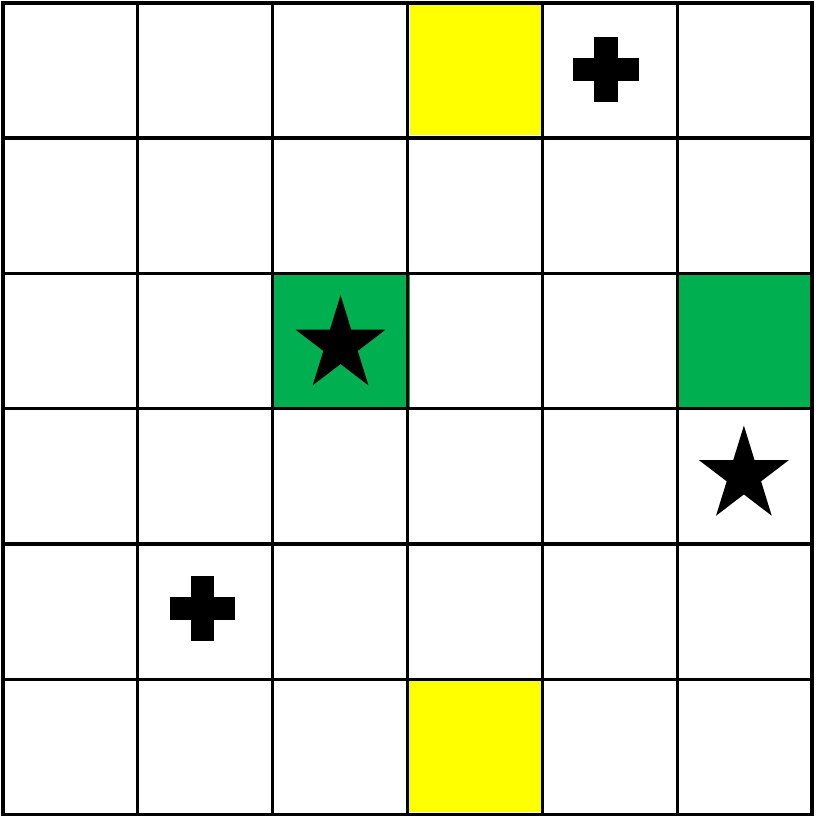}
		\caption{Environment}
		\label{fig:model.example.env}
	\end{subfigure}
	\quad
	\begin{subfigure}[b]{0.36\textwidth}
	    \centering
		\includegraphics[ height=2.4cm]{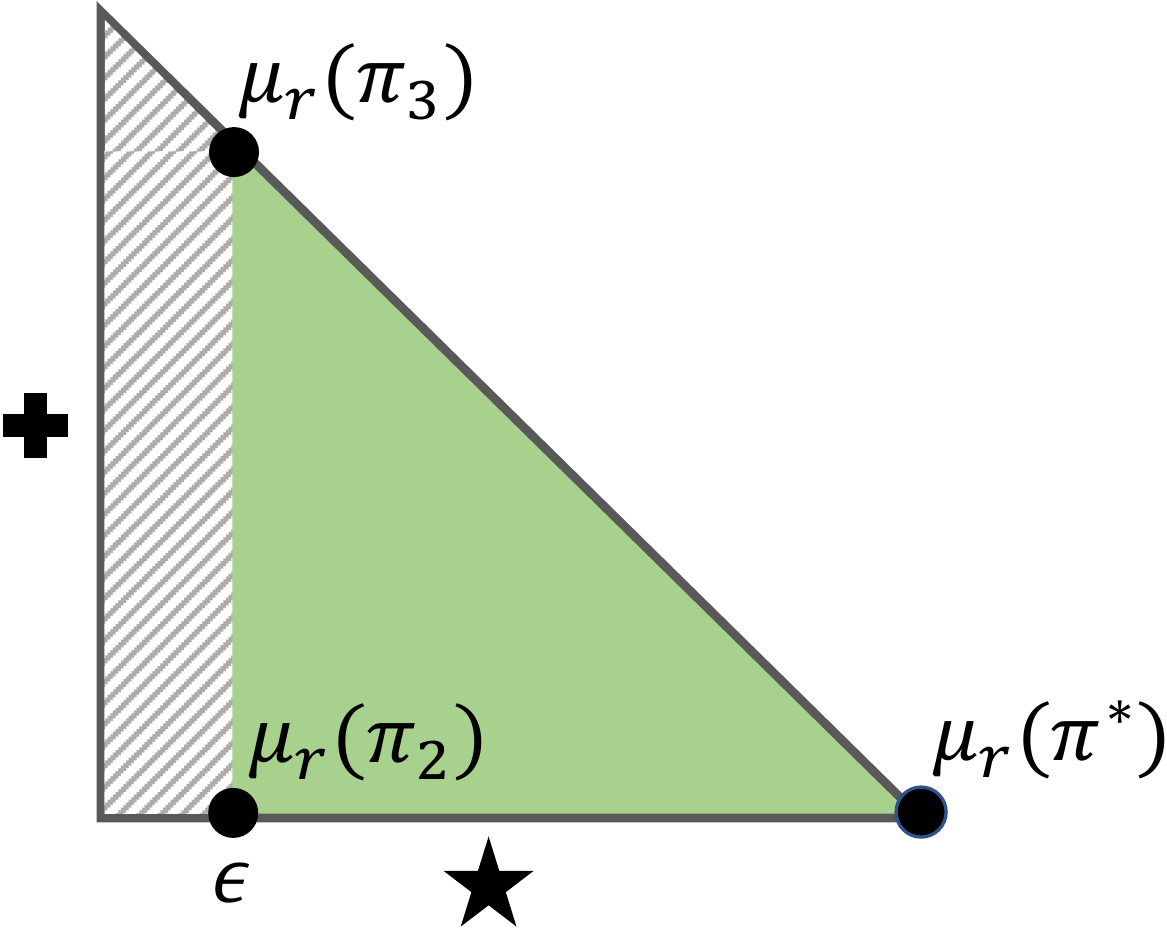}
		\caption{Set of $\mu_r(\pi)$ vectors}
		\label{fig:model.example.mu}
	\end{subfigure}		
   \vspace{-1mm}
   \caption{An illustrative example to showcase the suboptimality of teaching when the learner has preferences and constraints. \textbf{Environment:} Figure~\ref{fig:model.example.env} shows a grid-world environment inspired by the object-world and gathering game environments~\cite{levine2010feature,leibo2017multi,mendez18lifelongIRL}. Each cell represents a state, there are five actions given by {``left", ``up", ``right", "down", ``stay"}, the transitions are deterministic, and the starting state is the top-left cell. The agent's goal is to collect objects in the environment: Collecting a ``star" provides a reward of $1.0$ and a ``plus" a reward of $0.9$; objects immediately appear again upon collection, and the rewards are discounted with $\gamma$ close to 1.  The optimal policy $\pi^*$ is to go to the nearest ``star" and then ``stay" there.  \textbf{Preferences:} A small number of states in the environment are distractors, depicted by colored cells in Figure~\ref{fig:model.example.env}. We consider a learner who prefers to avoid ``green" distractors: it has a hard constraint that the probability of having a ``green" distractor within a $3\textnormal{x}3$ neighborhood, i.e., 1-cell distance, is at most $\epsilon=0.1$. \textbf{Feature expectation vectors:} Figure~\ref{fig:model.example.mu} shows the set of feature expectation vectors $\{\mu_r({\pi}): \pi \in \Pi\}$. The $x$-axis and the $y$-axis represent the discounted feature count for collecting ``star" and ``plus" objects, respectively. The striped region represents policies that are feasible w.r.t.\ the learner's constraint.  \textbf{Suboptimality of teaching:} Upon receiving demonstrations from an optimal policy $\pi^*$ with feature vector $\mu_r({\pi^*})$, the learner under its preference constraint can best match the teacher's demonstrations (in a sense of minimizing $\Vert \mu_r(\pi^{{\learner}}) - \mu_r(\pi^*) \Vert_2$) by outputting a policy with $\mu_r(\pi_2)$, which is clearly suboptimal w.r.t.\ the true rewards. Policy $\pi_3$ with feature vector $\mu_r({\pi_3})$  represents an alternate teaching policy which would have led to higher reward for the learner.}   
	\label{fig:model.example}
    \vspace{-4mm}
\end{figure*}

\textbf{Key challenges in teaching a learner with preference constraints.} In this paper, we study a novel setting where the learner has its own preferences which it additionally takes into consideration when learning a policy $\pi^{\learner}$ using teacher's demonstrations. We formally specify our learner model in the next section; here we highlight the key challenges that arise in teaching such a learner. Given that the learner's primary goal is no longer just imitating the teacher via feature matching, the learner's output policy can be suboptimal with respect to the true reward even if it had access to $\mu_r(\pi^*)$, i.e., the feature expectation vector of an optimal policy $\pi^*$. Figure~\ref{fig:model.example} provides an illustrative example to showcase the suboptimality of teaching when the learner has preferences and constraints. The key challenge that we address in this paper is that of designing a teaching algorithm that selects demonstrations while accounting for the learner's preferences.

\section{Learner Model} \label{sec:learner}
\vspace{-2mm}
In this section we describe the learner models we consider, including different ways of defining preferences and constraints. First, we introduce some notation and definitions that will be helpful.  We capture learner's preferences via a feature map $\phi_c: \mathcal{S} \rightarrow [0, 1]^{d_c}$. We define $\phi(s)$ as a concatenation of the two feature maps $\phi_r(s)$ and $\phi_c(s)$ given by  $[\phi_r(s)^{\dagger}, \phi_c(s)^{\dagger}]^{\dagger}$ and let $d = d_r + d_c$.  Similar to the map $\mu_r$, we define $\mu_c \colon \Pi \to \R^{d_c}$, $\pi \mapsto \E \left( \sum_{t=0}^\infty \gamma^t \phi_c(s_t)\right)$ and $\mu \colon \Pi \to \R^{d}$, $\pi \mapsto \E \left( \sum_{t=0}^\infty \gamma^t \phi(s_t)\right)$. Similar to $\Omega_r$, we define $\Omega_c \subseteq \R^{d_c}$ and  $\Omega \subseteq \R^{d}$ as the images of the maps $\mu_c(\Pi)$ and $\mu(\Pi)$. Note that for any policy $\pi \in \Pi$, we have $\mu(\pi) = [\mu_r(\pi)^{\dagger}, \mu_c(\pi)^{\dagger}]^{\dagger}$.
 
\textbf{Standard (discounted) MCE-IRL.}
Our learner models build on the (discounted) Maximum Causal Entropy (MCE) IRL framework~\cite{ziebart2008maximum,ziebart2010modeling,ziebart2013principle,zhou2018mdce}. In the standard (discounted) MCE-IRL framework, a learning agent aims to identify a policy that  matches the feature expectations of the teacher's demonstrations while simultaneously maximizing the (discounted) causal entropy given by $H(\pi) := H(\{a_t\}_{t=0, 1, \ldots} \Vert \{s_t\}_{t=0, 1, \ldots}) := \sum_{t=0}^{\infty} \gamma^t \E\Big[-\log \pi(a_t \mid s_t) \Big]$. 
\iftoggle{longversion}{%
More background is provided in Appendix~\ref{appendix:background-mceirl}.%
}
{%
More background is provided in Appendix D of the supplementary.%
}
 
\textbf{Including preference constraints.} 
The standard framework can be readily extended to include learner's preferences in the form of constraints on the preference features $\phi_c$. Clearly, the learner's preferences can render exact matching of the teacher's demonstrations infeasible and hence we relax this condition. 
To this end, we consider the following generic learner model:
\begin{align}
\max_{\pi,\ \delta^{\soft}_{r} \geq 0,\ \delta^{\soft}_{c} \geq 0} \quad &H(\pi) - C_r  \cdot \Vert \delta^{\soft}_{r} \Vert_p - C_c  \cdot \Vert  \delta^{\soft}_{c} \Vert_p \label{sec3.eq.generic-learner} \\
\textnormal{s.t.\quad}
&\lvert \mu_{r}(\pi)[i] - \hat{\mu}_{r}(\Xi^{\teacher})[i] \rvert \leq \delta^{\hard}_{r}[i] + \delta^{\soft}_{r}[i] \ \forall i \in \{1, 2, \ldots, d_r\}    \notag \\
&\qquad \qquad \quad  g_j(\mu_{c}(\pi)) \leq \delta^{\hard}_{c}[j] + \delta^{\soft}_{c}[j] \ \forall j \in \{1, 2, \ldots, m\}, \notag
\end{align}
Here, $g\colon \R^{d_c} \mapsto \R$ are $m$ convex functions representing preference constraints. The coefficients $C_r$ and $C_c$ are the learner's parameters which quantify the relative importance of matching the teacher's demonstrations and satisfying the learner's preferences.  The learner model is further characterized by parameters $\delta^{\hard}_{r}[i]$ and $\delta^{\hard}_{c}[j]$ (we will use the vector notation as $\delta^{\hard}_{r} \in \R^{d_r}_{\geq 0}$
and $\delta^{\hard}_{c} \in \R^{m}_{\geq 0}$). 
The optimization variables for the learner are given by $\pi$, $\delta^{\soft}_{r}[i]$, and $\delta^{\soft}_{c}[j]$ (we will use the vector notation as $\delta^{\soft}_{r} \in \R^{d_r}_{\geq 0}$
and $\delta^{\soft}_{c} \in \R^{m}_{\geq 0}$). These parameters ($\delta_r^{\hard}$, $\delta_c^{\hard}$) and optimization variables ($\delta_r^{\soft}$, $\delta_c^{\soft}$) characterize the following behavior:
\begin{itemize}
    \item While a mismatch of up to $\delta_r^{\hard}$ between the learner's and teacher's reward feature expectations incurs no cost regarding the optimization objective, a  mismatch larger than $\delta_r^{\hard}$ incurs a cost of $C_r \cdot \| \delta_r^{\soft} \|_p$.
    \item Similarly, while a violation of up to $\delta_c^{\hard}$ of the learner's preference constraints incurs no cost regarding the optimization objective, a  violation larger than $\delta_c^{\hard}$ incurs a cost of $C_c \cdot \| \delta_c^{\soft} \|_p$.
\end{itemize}
Next, we discuss two special instances of this generic learner model.

\subsection{Learner Model with Hard Preference Constraints}\label{sec:learner:hard}

It is instructive to study a special case of the above-mentioned generic learner model. Let us consider the model in Eq.~\ref{sec3.eq.generic-learner} with $\delta^{\hard}_{r}=0, \delta^{\hard}_{c}=0$, and a limiting case with $C_r, C_c \gg 0$ such that the term $H(\pi)$ can be neglected.  Now, if we additionally assume that $C_c \gg C_r$, the learner's objective can be thought of as finding a policy $\pi$ that minimizes the $L^p$ norm distance to the reward feature expectations of the teacher's demonstration while satisfying the constraints $g_j(\mu_{c}(\pi)) \leq 0 \ \forall j \in \{1, 2, \ldots, m\}$.
More formally, we study the following learner model given in Eq.~\ref{sec3.1.eq.generic-learner--limitingcase} below:
\begin{align}
\min_{\pi} \quad & \Vert \mu_{r}(\pi) - \hat{\mu}_{r}(\Xi^{\teacher}) \Vert_p  \label{sec3.1.eq.generic-learner--limitingcase} \\
\textnormal{s.t.\quad}
& g_j(\mu_{c}(\pi)) \leq 0 \ \forall j \in \{1, 2, \ldots, m\}. \notag
\end{align}
%
%
%
To get a better understanding of the model, we can define the learner's constraint set as $\Omega^{\learner} := \{\mu: \mu \in \Omega \textnormal{ s.t. } g_j(\mu_{c}) \leq 0 \ \forall j \in \{1, 2, \ldots, m\} \}$. Similar to $\Omega^{\learner}$, we define $\Omega^{\learner}_r \subseteq \Omega_r$ where $\Omega^{\learner}_r$ is the projection of the set $\Omega^{\learner}$ to the subspaces $\R^{d_r}$.  We can now rewrite the above optimization problem as $\min_{\pi\colon \mu_r(\pi) \in \Omega^{\learner}_r} \Vert \mu_{r}(\pi) - \hat{\mu}_{r}(\Xi^{\teacher}) \Vert_p$.
%
Hence, the learner's behavior is given by: 
%
\begin{enumerate}[(i)]
\item  \emph{Learner can match:} When $\hat{\mu}_r(\Xi^{\teacher}) \in \Omega_r^{\learner}$, the learner outputs a policy $\pi^{\learner}$ s.t.\ $\mu_r(\pi^{\learner}) = \hat{\mu}_{r}(\Xi^{\teacher})$.
\item \emph{Learner cannot match:} Otherwise, the learner outputs a policy $\pi^{\learner}$ such that $\mu_r(\pi^{\learner})$ is given by the $L^p$ norm projection of the vector $\hat{\mu}_{r}(\Xi^{\teacher})$ onto the set $\Omega_r^{\learner}$.
\end{enumerate}
Figure~\ref{fig:model.example} provides an illustration of the behavior of this learner model. We will design learner-aware teaching algorithms for this learner model in Section~\ref{sec:teacher.fullknowledge.special} and Section~\ref{sec:teacher.unknown}.

   
\subsection{Learner Model with Soft Preference Constraints}\label{sec:learner:soft}
Another interesting learner model that we study in this paper arises  from the generic learner when we consider $m=d_c$ number of box-type linear constraints with $g_j(\mu_c(\pi)) = \mu_c(\pi)[j] \ \forall j \in \{1, 2, \ldots, d_c\}$. We consider an $L^1$ norm penalty on violation, and for simplicity we consider $\delta^{\hard}_{r}[i]=0 \ \forall i \in \{1, 2, \ldots, d_r\}$. 
In this case, the learner's model is given by
\begin{align}
\max_{\pi,\ \delta^{\soft}_{r} \geq 0,\ \delta^{\soft}_{c} \geq 0} \quad &H(\pi) - C_r  \cdot \Vert \delta^{\soft}_{r} \Vert_1 - C_c  \cdot \Vert  \delta^{\soft}_{c} \Vert_1 \label{sec3.eq.soft-learner} \\
\textnormal{s.t.\quad}
&\lvert \mu_{r}(\pi)[i] - \hat{\mu}_{r}(\Xi^{\teacher})[i] \rvert \leq \delta^{\soft}_{r}[i] \ \forall i \in \{1, 2, \ldots, d_r\}    \notag \\
&\qquad \quad \quad \ \ \ \ \ \mu_c(\pi)[j] \leq \delta^{\hard}_{c}[j] + \delta^{\soft}_{c}[j] \ \forall j \in \{1, 2, \ldots, d_c\}, \notag
\end{align}


The solution to the above problem corresponds to a \emph{softmax} policy with a  reward function $R_{\bm{\lambda}}(s) = \langle \bm{w}_{\bm{\lambda}}, \phi(s) \rangle$ where $\bm{w}_{\bm{\lambda}} \in \R^d$ is parametrized by $\bm{\lambda}$. The optimal parameters $\bm{\lambda}$ can be computed efficiently and the corresponding softmax policy is then obtained by \emph{Soft-Value-Iteration} procedure (see  \cite[Algorithm.~9.1]{ziebart2010modeling}, \cite{zhou2018mdce}). 
\iftoggle{longversion}{%
Details are provided in Appendix~\ref{appendix:mceirl-with-preferences}.%
}
{%
Details are provided in Appendix E of the supplementary.%
}
We will design learner-aware teaching algorithms for this learner model in Section~\ref{sec:teacher.fullknowledge.generic}.

\section{Learner-aware Teaching under Known Constraints}\label{sec:teacher.fullknowledge}
In this section, we analyze the setting when the teacher has full knowledge of the learner's constraints. 

\subsection{A Learner-aware Teacher for Hard Preferences: \algAwareCMDP}\label{sec:teacher.fullknowledge.special}
Here, we design a learner-aware teaching algorithm when considering the learner from Section~\ref{sec:learner:hard}. Given that the teacher has full knowledge of the learner's preferences, it can compute an optimal teaching policy by maximizing the reward over policies that satisfy the learner's preference constraints, i.e., the teacher solves a constrained-MDP problem (see \cite{de1960problemes,altman1999constrained}) given by 
\begin{align*}
\max_{\pi} \quad & \langle \wopt, \mu_r(\pi) \rangle \quad 
\textnormal{s.t.} \quad
\mu_r(\pi) \in \Omega^\learner_r.
\end{align*}
We refer to an optimal solution of this problem as $\pi^\textnormal{aware}$ and the corresponding teacher as \algAwareCMDP.  We can make the following observation formalizing the value of  learner-aware teaching:
\begin{theorem}
For simplicity, assume that the teacher can provide an exact feature expectation $\mu(\pi)$ of a policy instead of providing demonstrations to the learner. Then, the value of learner-aware teaching is
\begin{align*}
\max_{\pi \textnormal{ s.t. } \mu_r(\pi) \in \Omega^\learner_r} \Big\langle \wopt, \mu_r(\pi) \Big\rangle - \Big\langle \wopt, \textnormal{Proj}_{\Omega^\learner_r}\big(\mu_r(\pi^*)\big) \Big\rangle \geq 0.
\end{align*} 
\end{theorem}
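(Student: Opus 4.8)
The plan is to recognize the claimed inequality as nothing more than the statement that maximizing a linear functional over a feasible set dominates the value of that functional at any single point of the set. Concretely, the first term $\max_{\pi \colon \mu_r(\pi) \in \Omega^\learner_r} \langle \wopt, \mu_r(\pi) \rangle$ is the optimum of the linear map $\mu \mapsto \langle \wopt, \mu \rangle$ over the set $\Omega^\learner_r$, whereas the subtracted term $\langle \wopt, \textnormal{Proj}_{\Omega^\learner_r}(\mu_r(\pi^*)) \rangle$ evaluates the \emph{same} linear map at the single point $\textnormal{Proj}_{\Omega^\learner_r}(\mu_r(\pi^*))$. So it suffices to argue that this projected point is itself a feasible candidate in the maximization.

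First I would record the regularity of $\Omega^\learner_r$: the excerpt already notes that $\Omega_r$ is convex and bounded, and intersecting $\Omega$ with the convex preference constraints $g_j(\mu_c) \leq 0$ and projecting to the reward coordinates $\R^{d_r}$ preserves convexity, with boundedness (indeed compactness) inherited from the discounting $\gamma \in (0,1)$. This guarantees that the projection $\textnormal{Proj}_{\Omega^\learner_r}(\cdot)$ is well-defined and, crucially for us, that its output is by definition an element of $\Omega^\learner_r$. I would also invoke that every element of $\Omega^\learner_r$ is realized as $\mu_r(\pi)$ for some policy $\pi$ satisfying the learner's constraints --- this holds by the very definition of $\Omega^\learner_r$ as the projection of $\Omega^\learner = \{\mu \in \Omega : g_j(\mu_c) \leq 0\ \forall j\}$ to the reward subspace.

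Next I would match the subtracted term to the outcome of learner-agnostic teaching: the teacher supplies the optimal feature expectation $\mu_r(\pi^*)$, and by the learner model of Section~\ref{sec:learner:hard}, case (ii) (``learner cannot match''), the learner returns a policy whose reward feature expectation equals the projection of $\mu_r(\pi^*)$ onto $\Omega^\learner_r$. Since this projection lies in $\Omega^\learner_r$, it is a feasible point for the maximization defining the first term; hence the maximum is at least $\langle \wopt, \textnormal{Proj}_{\Omega^\learner_r}(\mu_r(\pi^*)) \rangle$, and the displayed difference is nonnegative, which is exactly the claim.

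There is essentially no hard step here: the inequality reduces to ``the optimum over a set is at least the value at any point of that set.'' The only genuine care needed is bookkeeping --- verifying that $\Omega^\learner_r$ is closed and convex so that the projection is well-defined and truly lands inside the feasible region, and correctly identifying the subtracted term with the learner's actual output. If one wished to go beyond the sign and \emph{quantify} the gap, the more interesting (and harder) task would be to lower-bound the difference in terms of the distance from $\mu_r(\pi^*)$ to $\Omega^\learner_r$ and the orientation of $\wopt$ relative to the boundary of $\Omega^\learner_r$; but the theorem as stated asks only for nonnegativity, so that refinement is unnecessary.
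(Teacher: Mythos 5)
Your proposal is correct and coincides with the paper's own (implicit) reasoning: the paper states this theorem as an immediate observation without a written proof, and the intended argument is exactly yours --- the subtracted term is the learner's outcome under agnostic teaching, namely the value of $\langle \wopt, \cdot \rangle$ at $\textnormal{Proj}_{\Omega^\learner_r}(\mu_r(\pi^*))$, which is a feasible point of the maximization defining the first term, so the difference is nonnegative. Your bookkeeping (convexity/compactness of $\Omega^\learner_r$ so the projection is well-defined and lands in the set, and realizability of every point of $\Omega^\learner_r$ by a constraint-satisfying policy) is exactly the right care to take and is consistent with the paper's definitions.
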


When the set $\Omega^\learner$ is defined via a set of linear constraints, the above problem can be formulated as a linear program and solved exactly.
\iftoggle{longversion}{%
Details are provided in Appendix~\ref{appendix:lp}.%
}
{%
Details are provided in Appendix F the supplementary material.%
}

\subsection{A Learner-aware Teacher for Soft Preferences: \algAwareBiLevel}\label{sec:teacher.fullknowledge.generic}
For the learner models in Section~\ref{sec:learner}, the optimal learner-aware teaching problem can be naturally formalized as the following bi-level optimization problem:
\begin{align}
    \max_{\pi^\teacher} \quad & R(\pi^\learner) \quad
    \textnormal{s.t.} 
       \quad \pi^\learner \in \arg \max_{\pi} \textnormal{IRL}(\pi, \mu(\pi^\teacher)),
\end{align}
where $\textnormal{IRL}(\pi, \mu(\pi^\teacher))$ stands for the IRL problem solved by the learner given demonstrations from $\pi^\teacher$ and can include preferences of the learner (see Eq.~\ref{sec3.eq.generic-learner} in Section~\ref{sec:learner}).

There are many possibilities for solving this bi-level optimization problem---see for example~\cite{sinha2018review} for an overview.
In this paper we adopted a \emph{single-level reduction} approach to simplify the above bi-level optimization problem as this results in particularly intuitive optimiziation problems for the teacher.
The basic idea of single-level reduction is to replace the lower-level problem, i.e., $\arg \max_{\pi} \textnormal{IRL}(\pi, \mu(\pi^\teacher))$, by the optimality conditions for that problem given by the Karush-Kuhn-Tucker conditions \cite{boyd2004convex,sinha2018review}. For the learner model outlined in Section~\ref{sec:learner:soft}, these reductions take the following form 
\iftoggle{longversion}{%
(see Appendix~\ref{appendix:bi-level} for details):%
}
{%
(see Appendix G in the supplementary material for details):%
}
\begin{align}
\max_{\bm{\lambda} := \{\bm{\alpha}^{\low} \in \R^{d_r}, \ \bm{\alpha}^{\up} \in \R^{d_r}, \ \bm{\beta}  \in \R^{d_c}\}} \quad &\langle \wopt, \mu_r(\pi_{\bm{\lambda}}) \rangle \label{sec4.eq.bilevel-teacher} \\
\textnormal{s.t.\quad} 
&0 \leq \bm{\alpha}^{\low} \leq  C_r \notag \\
&0 \leq \bm{\alpha}^{\up} \leq  C_r  \notag \\
\{0 \leq \bm{\beta} \leq  C_c \texttt{\ AND\ } \mu_c(\pi_{\bm{\lambda}}) \leq  \delta_c^{\hard}\}  &\texttt{\ OR\ } \{\bm{\beta} =  C_c \texttt{\ AND\ } \mu_c(\pi_{\bm{\lambda}}) \geq  \delta_c^{\hard}\} \notag
\end{align}
where $\pi_{\bm{\lambda}}$ corresponds to a \emph{softmax} policy with a reward function $R_{\bm{\lambda}}(s) = \langle \bm{w}_{\bm{\lambda}}, \phi(s) \rangle$ for  $\bm{w}_{\bm{\lambda}} = [(\bm{\alpha}^{\low}-\bm{\alpha}^{\up})^\dagger, -\bm{\beta}^\dagger]^\dagger$. Thus, finding optimal demonstrations means optimization over \emph{softmax} teaching policies while respecting the learner's preferences.
To actually solve the above optimization problem and find good teaching policies, we use an approach inspired by the Frank-Wolfe algorithm~\cite{jaggi2013}
\iftoggle{longversion}{
detailed in Appendix~\ref{appendix:bi-level}.
}
{
detailed in Appendix G of the supplementary material. 
}
We refer to a teacher implementing this approach as \algAwareBiLevel.

\section{Learner-Aware Teaching Under Unknown Constraints}
\label{sec:teacher.unknown}
In this section, we consider the more realistic and challenging
setting in which the teacher $\teacher$ does \emph{not} know  the learner $\learner$'s
constraint set $\Omega_r^{\learner}$. Without feedback from $\learner$, $\teacher$ can generally not do better than the agnostic
teacher who simply ignores any constraints. We therefore assume that
$\teacher$ and $\learner$ interact in rounds as described by Algorithm
\ref{algo:interaction1}. The two versions of the algorithm we describe
in Sections \ref{sec:teacher.unknown.adaptive-greedy} and
\ref{sec:teacher.unknown.adaptive-blackbox} are obtained by specifying how
$\teacher$ adapts the teaching policy in each round.

\begin{algorithm}[H]
    \caption{Teacher-learner interaction in the adaptive teaching setting}
    \label{algo:interaction1}
    \begin{algorithmic}[1]
        \State Initial teaching policy $\pi^{\teacher, 0}$ (e.g., optimal policy ignoring any constraints)
        \For{round $i = 0, 1, 2, \dots$}
            \State Teacher provides demonstrations with feature vector $\mu_r^{\teacher,i}$ using policy $\pi^{\teacher, i}$
            \State Learner upon receiving $\mu_r^{\teacher,i}$ computes a policy $\pi^{\learner,i}$ with feature vector $\mu_r^{\learner,i}$
            \State Teacher observes learner's feature vector $\mu_r^{\learner,i}$ and adapts the teaching policy
        \EndFor{}
    \end{algorithmic}
\end{algorithm}

In this section, we assume that 
$\learner$ is as described in Section \ref{sec:learner:hard}: Given demonstrations $\Xi^\teacher$, $\learner$ finds a policy $\pi^{\learner}$ such that $\mu_r(\pi^{\learner})$ matches the $L^2$-projection of $\hat\mu_r(\Xi^{\teacher})$ onto
$\Omega^{\learner}_r$. For the sake of simplifying the presentation
and the analysis, we also assume that $\learner$ and $\teacher$ can
observe the exact feature expectations of their respective
policies, e.g., $\hat \mu_r(\Xi^{\teacher}) = \mu_r(\pi^{\teacher})$ if $\Xi^{\teacher}$ is sampled from $\pi^{\teacher}$.



\subsection{An Adaptive Learner-aware Teacher Using Volume Search: \algAdAwareGreedy}
\label{sec:teacher.unknown.adaptive-greedy}
In our first adaptive teaching algorithm \algAdAwareGreedy, $\teacher$
maintains an estimate
$\hat \Omega_r^{\learner} \supset \Omega_r^{\learner}$ of the
learner's constraint set, which in each round gets updated by
intersecting the current version with a certain affine halfspace, thus
reducing the volume of $\hat \Omega_r^{\learner}$. The new teaching policy is then any policy $\pi^{\teacher, i+1}$ which
is optimal under the constraint that
$\mu^{\teacher, i+1} \in \hat \Omega_r^{\learner}$. The interaction ends as soon as
$\lVert \mu^{\learner,i}_{r} - \mu^{\teacher,i}_{r} \rVert_2 \leq \epsilon$ for a threshold $\epsilon$.
\iftoggle{longversion}{%
Details are provided in Appendix~\ref{sec:appendix.teacher.unknown.greedy}.%
}
{%
Details are provided in Appendix C.1 of the supplementary.%
}

\begin{theorem}
    \label{thm:teacher.unknown.adaptive-greedy}
    Upon termination of \algAdAwareGreedy, $\learner$'s output policy
    $\pi^{\learner}$ satisfies $R(\pi^{\learner}) \geq R(\pi^{\textnormal{aware}}) -
    \epsilon$ for any policy $\pi^\textnormal{aware}$ which is optimal under $\learner$'s constraints.
    For the special case that $\Omega^{\learner}_r$ is a polytope defined by $m$
    linear inequalities, the algorithm terminates in $O(m^{d_r})$
    iterations.
\end{theorem}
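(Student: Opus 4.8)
The plan is to prove the two claims separately: first the approximation guarantee, which is a short convexity argument, and then the iteration bound, which is the combinatorial heart of the statement.

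For the approximation guarantee I would first establish the invariant $\Omega^\learner_r \subseteq \hat\Omega^\learner_r$ throughout the run. The base case holds because the estimate is initialized to a superset of $\Omega^\learner_r$, and the inductive step is exactly the variational characterization of the Euclidean projection: since $\mu_r^{\learner,i} = \textnormal{Proj}_{\Omega^\learner_r}(\mu_r^{\teacher,i})$, every $\mu \in \Omega^\learner_r$ satisfies $\ipp{\mu_r^{\teacher,i}-\mu_r^{\learner,i}}{\mu - \mu_r^{\learner,i}} \le 0$, i.e.\ $\Omega^\learner_r$ lies in the halfspace intersected into the estimate. Given this invariant, the teacher's terminal policy maximizes $\ipp{\wopt}{\cdot}$ over a set containing $\Omega^\learner_r$, so $\ipp{\wopt}{\mu_r^\teacher} \ge \max_{\mu \in \Omega^\learner_r}\ipp{\wopt}{\mu} = R(\pi^{\textnormal{aware}})$. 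The learner's output $\mu_r^\learner$ is feasible (a projection onto $\Omega^\learner_r$) and, by the stopping rule, within $\epsilon$ of $\mu_r^\teacher$ in $\norm{\cdot}_2$; combining these with Cauchy--Schwarz and $\norm{\wopt}_2 \le \norm{\wopt}_1 \le 1$ yields $R(\pi^\learner) \ge R(\pi^{\textnormal{aware}}) - \epsilon$.

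For the iteration bound I would reduce the count to the face structure of the polytope $\Omega^\learner_r$. The key observation is that the halfspace added in a non-terminal round $i$ has nonzero outer normal $n_i = \mu_r^{\teacher,i} - \mu_r^{\learner,i}$, which lies in the normal cone of $\Omega^\learner_r$ at the boundary point $\mu_r^{\learner,i}$; letting $F_i$ be the unique proper face whose relative interior contains $\mu_r^{\learner,i}$, the normal $n_i$ is orthogonal to $\textnormal{aff}(F_i)$. I would then bound, for each fixed face $F$, the number of rounds with $F_i = F$. If rounds $i' < i$ both hit $F$, then $\mu_r^{\teacher,i}$ has survived the earlier cut, giving $\ipp{n_{i'}}{\mu_r^{\teacher,i} - \mu_r^{\learner,i'}} \le 0$; writing $\mu_r^{\teacher,i}-\mu_r^{\learner,i'} = n_i + (\mu_r^{\learner,i}-\mu_r^{\learner,i'})$, where the second summand is a difference of points of $F$ and hence orthogonal to $n_{i'}$, this collapses to $\ipp{n_{i'}}{n_i}\le 0$. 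Thus the normals attached to a single face are nonzero and pairwise non-positively correlated, and they live in the orthogonal complement of $\textnormal{aff}(F)$, a space of dimension $\textnormal{codim}(F) \le d_r$. The classical fact that $\R^c$ contains at most $2c$ nonzero vectors with pairwise non-positive inner products then caps the rounds hitting $F$ by $2\,\textnormal{codim}(F) \le 2 d_r$.

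Finally I would sum over faces: a polytope with $m$ facets in $\R^{d_r}$ has at most $\sum_{k=0}^{d_r}\binom{m}{k} = O(m^{d_r})$ faces, since each face's affine hull is an intersection of at most $d_r$ of the facet hyperplanes; hence the total number of rounds is at most $2 d_r \cdot O(m^{d_r}) = O(m^{d_r})$, which in particular yields finite termination. I expect the main obstacle to be precisely this per-face estimate: the cuts are projection-induced rather than facet-aligned, so a low-dimensional face with an obtuse normal cone can be separated by several distinct cuts, and the crux is to recognize that the surviving teacher-side normals are forced to be pairwise non-positively correlated and to invoke the $2c$-vector bound. Once this per-face estimate is in place, the approximation argument and the face enumeration are routine.
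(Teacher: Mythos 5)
Your proof of the first claim is essentially identical to the paper's: the projection's variational inequality gives the invariant $\Omega_r^{\learner} \subseteq \hat\Omega_r^{\learner,i}$, hence $R(\pi^{\teacher,i}) \geq R(\pi^{\textnormal{aware}})$ in every round, and the stopping rule together with Cauchy--Schwarz and $\norm{\wopt}_2 \leq \norm{\wopt}_1 \leq 1$ yields the $\epsilon$-guarantee (the paper leaves this last step implicit).

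For the iteration bound you take a genuinely different, and in fact more rigorous, route. The paper's entire argument is one sentence: the number of faces of the polytope is $O(m^{d_r})$ and ``one face is `eliminated' in each round,'' so the face count bounds the number of iterations. As your closing remark anticipates, that per-round elimination claim is not literally correct: each cut hyperplane \emph{contains} the face $F_i$ holding the current projection (it passes through $\mu_r^{\learner,i}$ with normal orthogonal to $\textnormal{aff}(F_i)$), so the face survives in $\hat\Omega_r^{\learner}$ and can be revisited --- e.g., a low-dimensional face can be approached from opposite sides of its affine hull, as with a segment in $\R^2$ hit first from above and then from below. Your repair handles exactly this: the normals $n_i$ of cuts attached to a fixed face $F$ are nonzero (in non-terminal rounds), lie in $(\textnormal{aff}(F))^{\perp}$, and are pairwise non-positively correlated because each teacher point survives all earlier cuts and learner points within $F$ differ by vectors orthogonal to the earlier normals; the classical bound of $2c$ nonzero pairwise non-positively correlated vectors in $\R^{c}$ then caps the visits to $F$ at $2\,\textnormal{codim}(F) \leq 2d_r$. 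Combined with the face count $\sum_{k \leq d_r}\binom{m}{k} = O(m^{d_r})$ (which you also justify, via affine hulls being intersections of at most $d_r$ facet hyperplanes, where the paper just cites the count), this gives $2d_r \cdot O(m^{d_r}) = O(m^{d_r})$ rounds. What the paper's one-liner buys is brevity; what your argument buys is a proof that actually withstands the face-revisiting phenomenon, at the cost of one extra lemma, and it recovers the paper's stated asymptotic bound exactly.
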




\subsection{An Adaptive Learner-aware Teacher Using Line Search:
    \algAdAwareLine}
\label{sec:teacher.unknown.adaptive-blackbox}
In our second adaptive teaching algorithm, \algAdAwareLine, $\teacher$
adapts the teaching policy by performing a binary search on a line
segment of the form
$\{\mu_r^{\learner, i} + \alpha \wopt ~|~ \alpha \in [\alpha_{\min},
\alpha_{\max}]\} \subset \R^{d_r}$ to find a vector
$\mu_r^{\teacher, i+1} = \mu_r^{\learner, i} + \alpha_i \wopt$ that is
the vector of feature expectations of a policy; here $\alpha_{\max} > \alpha_{\min} > 0$ are fixed constants.
If that is not
successful, the teacher finds a teaching policy with
$\mu_r^{\teacher, i+1} \in \argmin_{\mu_r \in \Omega_r} \Vert \mu_r -
\mu_r^{\learner, i} - \alpha_{\min} \wopt \Vert_2$. The following
theorem analyzes the convergence of $\learner$'s performance to $\overline R_{\learner} :
= \max_{\mu_r \in \Omega_r} R(\mu_r)$ under the assumption
that $\teacher$'s search succeeds in every round. 
\iftoggle{longversion}{%
The proof and further details are provided in Appendix~\ref{appendix:teacher.unknown.linesearch}.%
}
{%
The proof and further details are provided in Appendix C.2 of the supplementary.%
}



\begin{theorem}
    \label{thm:convergence}
    Fix some $\varepsilon> 0$ and assume that there exists a constant
    $\alpha_{\min} > 0$ such that, as long as
    $\overline R_\learner - R(\mu_r^{\learner, i}) > \varepsilon$, the
    teacher can find a teaching policy $\pi^{\teacher, i+1}$
    satisfying
    $\mu_r^{\teacher, i+1} = \mu_r^{\learner, i} + \alpha_i \wopt$ for
    some $\alpha_i \geq \alpha_{\min}$. Then the learner's performance
    increases monotonically in each round of \algAdAwareLine, i.e.,
    $R(\mu_r^{\learner, i + 1}) > R(\mu_r^{\learner, i})$. 
    Moreover, after at most
    $O(\frac{D^2}{\varepsilon \alpha_{\min}} \log
    \frac{D}{\varepsilon})$ teaching steps, the learner's performance
    satisfies
    $R(\mu_r^{\learner, i}) > \overline R_\learner -
    2\varepsilon$. Here we abbreviate $D := \diam \Omega_r$.
\end{theorem}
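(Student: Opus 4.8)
The plan is to recognize the learner's update as a projected gradient ascent step and then run a potential-function argument. Since $R(\mu_r)=\langle\wopt,\mu_r\rangle$ is linear, $\wopt$ is the (constant) gradient of the reward, and because the learner returns the Euclidean projection of the demonstrated $\mu_r^{\teacher,i+1}=\mu_r^{\learner,i}+\alpha_i\wopt$ onto the convex set $\Omega_r^{\learner}$, the update reads
\[
\mu_r^{\learner,i+1}=\textnormal{Proj}_{\Omega_r^{\learner}}\big(\mu_r^{\learner,i}+\alpha_i\wopt\big),
\]
i.e., exactly projected gradient ascent on the concave objective $R$ over $\Omega_r^{\learner}$ with step size $\alpha_i\ge\alpha_{\min}$.

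First I would establish monotonicity. Writing $\Delta_i:=\mu_r^{\learner,i+1}-\mu_r^{\learner,i}$ and applying the variational characterization of the projection onto the convex set $\Omega_r^{\learner}$ with test point $\mu_r^{\learner,i}\in\Omega_r^{\learner}$ gives $\langle(\mu_r^{\learner,i}+\alpha_i\wopt)-\mu_r^{\learner,i+1},\,\mu_r^{\learner,i}-\mu_r^{\learner,i+1}\rangle\le 0$, which after expansion becomes $\alpha_i\langle\wopt,\Delta_i\rangle\ge\Vert\Delta_i\Vert_2^2$. Hence
\[
R(\mu_r^{\learner,i+1})-R(\mu_r^{\learner,i})=\langle\wopt,\Delta_i\rangle\ \ge\ \tfrac{1}{\alpha_i}\Vert\Delta_i\Vert_2^2\ \ge\ 0,
\]
with strict inequality whenever $\Delta_i\neq0$. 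Moreover $\Delta_i=0$ would force $\mu_r^{\learner,i}$ to maximize $\langle\wopt,\cdot\rangle$ over $\Omega_r^{\learner}$, which the standing assumption (an improving feasible teaching point at distance $\ge\alpha_{\min}$ along $\wopt$ exists whenever $\overline R_{\learner}-R(\mu_r^{\learner,i})>\varepsilon$) rules out. This yields the claimed strict increase.

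For the rate I would run a potential argument against a maximizer $\nu^\star$ of $R$ over $\Omega_r^{\learner}$, with $\Phi_i:=\Vert\mu_r^{\learner,i}-\nu^\star\Vert_2^2$. The Pythagorean inequality for projections onto convex sets gives
\[
\Phi_{i+1}\le\Phi_i-2\alpha_i\,g_i+\alpha_i^2\Vert\wopt\Vert_2^2,\qquad g_i:=R(\nu^\star)-R(\mu_r^{\learner,i}).
\]
The crucial extra ingredient is that the demonstrated point $\mu_r^{\learner,i}+\alpha_i\wopt$ is itself a valid feature-expectation vector, i.e., lies in $\Omega_r$; since $R\le\overline R_{\learner}$ on $\Omega_r$, this forces $\alpha_i\Vert\wopt\Vert_2^2\le\overline R_{\learner}-R(\mu_r^{\learner,i})$. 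Substituting lets me absorb the quadratic step term into the linear one and obtain a genuine per-round decrease $\Phi_{i+1}\le\Phi_i-\alpha_{\min}\varepsilon$ (up to a constant) in every round with gap above the target. Telescoping with $\Phi_0\le D^2$ then bounds the number of such rounds by $O(D^2/(\alpha_{\min}\varepsilon))$, and monotonicity of $g_i$ converts this into the per-iterate gap bound; shrinking the residual gap between $\nu^\star$ and the unconstrained value $\overline R_{\learner}$ geometrically in $O(\log(D/\varepsilon))$ phases accounts for the extra logarithmic factor and upgrades $\varepsilon$-accuracy to the stated $2\varepsilon$-accuracy.

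The main obstacle is the step I glossed over in one line: quantitatively lower-bounding the projection displacement $\Vert\Delta_i\Vert_2$ by the reward gap so that each round makes bounded-below progress rather than merely nonnegative progress, and, relatedly, reconciling the fact that the learner only ever outputs points of $\Omega_r^{\learner}$ with the goal of approaching the unconstrained optimum $\overline R_{\learner}=\max_{\mu_r\in\Omega_r}R(\mu_r)$. Both are controlled by exploiting that the teaching point is a genuine element of $\Omega_r$ at distance $\ge\alpha_{\min}$ in the $\wopt$ direction: this simultaneously bounds $\alpha_i$ by the gap (above) and, via the standing assumption, prevents the iterates from stalling at a constrained optimum before the gap drops below $\varepsilon$.
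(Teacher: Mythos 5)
Your recasting of the learner's update as projected gradient ascent, $\mu_r^{\learner,i+1}=\textnormal{Proj}_{\Omega_r^{\learner}}\big(\mu_r^{\learner,i}+\alpha_i\wopt\big)$, takes a genuinely different route from the paper: the paper proves a planar-geometry lemma (its Proposition 1, a right-triangle/slope estimate using Pythagoras) which upper-bounds the current gap by $\Delta R_i + D\sqrt{\Delta R_i/(\alpha_i-\Delta R_i)}$, and then converts this into geometric decay of the gap (whence the $\log\frac{D}{\varepsilon}$ factor). Your monotonicity step — the variational inequality giving $\alpha_i\langle \wopt,\Delta_i\rangle\geq\Vert\Delta_i\Vert_2^2$ — is correct and cleaner than the paper's. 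However, your rate argument has a genuine gap, located exactly at what you call the ``crucial extra ingredient.'' The Pythagorean/potential step $\Phi_{i+1}\leq\Vert\mu_r^{\teacher,i+1}-\nu^\star\Vert_2^2$ is only valid if the comparison point $\nu^\star$ lies \emph{inside} the projection set $\Omega_r^{\learner}$, so $g_i$ must be the gap to the \emph{constrained} optimum $\max_{\mu_r\in\Omega_r^{\learner}}R(\mu_r)$. But your step-size bound $\alpha_i\Vert\wopt\Vert_2^2\leq \overline R_\learner - R(\mu_r^{\learner,i})$ only controls $\alpha_i$ by the gap to the \emph{unconstrained} optimum $\max_{\mu_r\in\Omega_r}R(\mu_r)$, which exceeds $g_i$ by the fixed residual $B:=\max_{\Omega_r}R-\max_{\Omega_r^{\learner}}R\geq 0$. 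Substituting gives only $\Phi_{i+1}\leq\Phi_i-\alpha_i(g_i-B)$: this is a per-round decrease only while $g_i>B$, and in the regime this paper cares about (preference constraints that actually bind, i.e.\ $B$ large) the telescoping collapses entirely.

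Your closing suggestion — shrinking ``the residual gap between $\nu^\star$ and the unconstrained value $\overline R_\learner$ geometrically in $O(\log(D/\varepsilon))$ phases'' — cannot repair this: $B$ is a fixed geometric quantity, and the learner's outputs never leave $\Omega_r^{\learner}$, so no teaching scheme reduces it. (Relatedly, if your telescoping did go through, it would yield $O(D^2/(\varepsilon\alpha_{\min}))$ rounds with $\varepsilon$-accuracy and \emph{no} log factor; the paper's log factor arises because its per-round improvement is proportional to the current gap, not from any phase structure, so needing to ``account for'' it is a warning sign rather than a finishing touch.) To be fair, the conflation you fell into mirrors an inconsistency in the paper itself: the theorem defines $\overline R_\learner$ over $\Omega_r$ while Proposition 1 defines it over $\Omega_r^{\learner}$. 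But the paper's proof is coherent under the constrained-maximum reading precisely because Proposition 1 bounds the constrained gap purely from the projection geometry — larger $\alpha_i$ only helps there, and no upper bound on $\alpha_i$ is ever needed — which is exactly the ingredient your potential-function argument lacks.
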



\section{Experimental Evaluation}\label{sec:experiments}

In this section we evaluate our teaching algorithms for different types of learners on the environment introduced in Figure~\ref{fig:model.example}. 
%
The environment we consider here has three types of reward objects, i.e., a ``star" object with  reward of $1.0$, a ``plus" object with reward of $0.9$, and a ``dot" object with reward of $0.2$. Two objects of each type are placed randomly on the grid such that there is always only a single object in each grid cell. 
The presence of an object of type ``star'', ``plus'', or ``dot'' in some state $s$ is encoded in the reward features $\phi_r(s)$ by a binary-indicator for each type such that $d_r=3$.
We use a discount factor of $\gamma=0.99$. Upon collecting an object, there is a $0.1$ probability of transiting to a terminal state.

{\bfseries Learner models.}
We consider a total of 5 different learners whose preferences can be described by \emph{distractors} in the environment. Each learner prefers to avoid a certain subset of these distractors. There is a total of 4 of distractors: (i) two ``green" distractors are randomly placed at a distance of 0-cell and 1-cell to the ``star" objects, respectively; (ii) two ``yellow" distractors are randomly placed at a distance of 1-cell and 2-cells to the ``plus" objects, respectively, see\ Figure~\ref{fig:exp-known-learner-preferences.a}.


Through these distractors we define learners L1-L5 as follows: \textbf{(L1)} no preference features ($d_c=0$); \textbf{(L2)} two preference features ($d_c=2$) such that $\phi_c(s)[1]$ and $\phi_c(s)[2]$ are binary indicators of whether there is a ``green" distractor at a distance of 0-cells or 1-cell, respectively; \textbf{(L3)} four preference features ($d_c=4$) such that $\phi_c(s)[1],\phi_c(s)[2]$ are as for L2, and $\phi_c(s)[3]$ and $\phi_c(s)[4]$ are binary indicators of whether there is a ``green" distractor at a distance of 2-cells or a ``yellow'' distractor at a distance of 0-cells, respectively; \textbf{(L4)} five preference features ($d_c=5$) such that $\phi_c(s)[1],\ldots,\phi_c(s)[4]$ are as for L3, and $\phi_c(s)[5]$ is a binary indicator whether there is a ``yellow'' distractor at a distance of 1-cell; and \textbf{(L5)} six preference features ($d_c=6$)  such that $\phi_c(s)[1],\ldots,\phi_c(s)[5]$ are as for L4, and $\phi_c(s)[6]$ is a binary indicator whether there is a ``yellow'' distractor at a distance of 2-cells.




 
The first row in Figure~\ref{fig:exp-known-learner-preferences} shows an instance of the considered object-worlds and indicates the preference of the learners to avoid certain regions by the gray area.

\begin{figure}[!htbp]
    \centering
    \begin{subfigure}[t]{0.82\textwidth}
      \centering
      \includegraphics[width=0.18\textwidth]{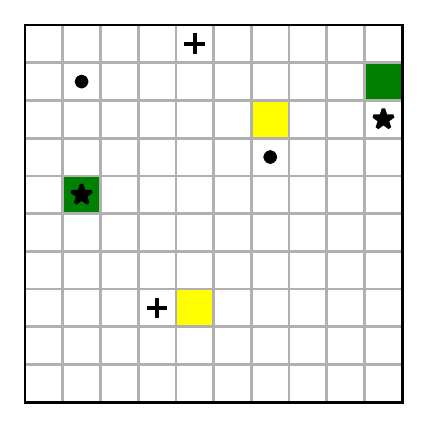}
      \includegraphics[width=0.18\textwidth]{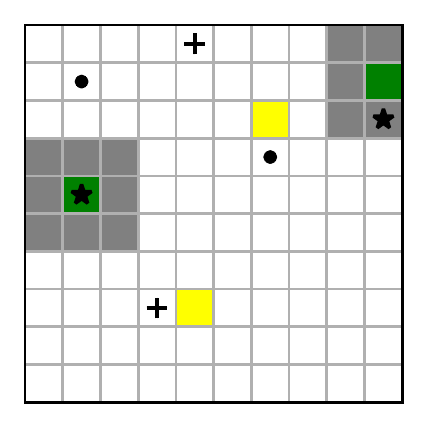}
      \includegraphics[width=0.18\textwidth]{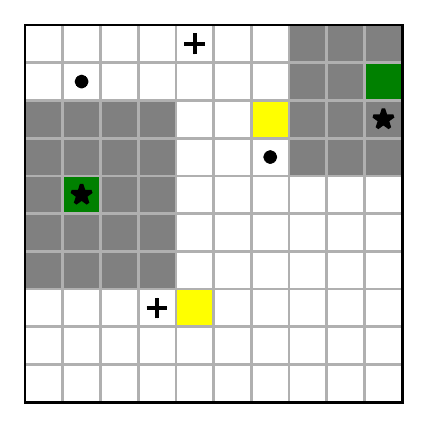}
      \includegraphics[width=0.18\textwidth]{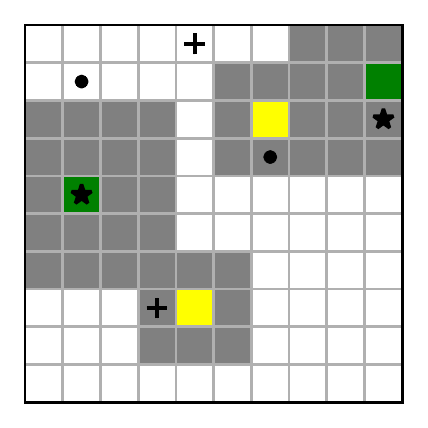}
      \includegraphics[width=0.18\textwidth]{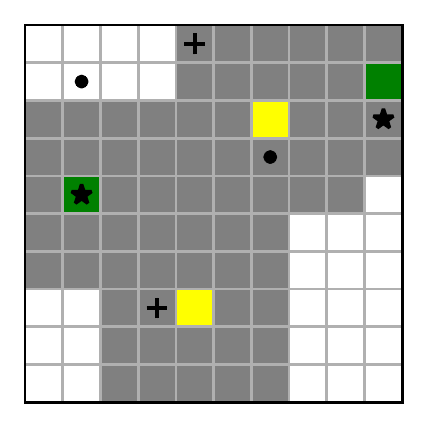}\vspace{-2mm}
      \caption{Environments and learners' preferences for 5 different learners \textsc{L1}, $\ldots$, \textsc{L5}}
      \label{fig:exp-known-learner-preferences.a}
    \end{subfigure}%
    \\
    \begin{subfigure}[t]{0.82\textwidth}
      \centering
      \includegraphics[width=0.18\textwidth]{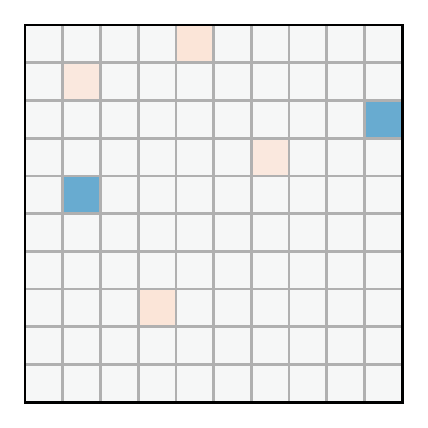}
      \includegraphics[width=0.18\textwidth]{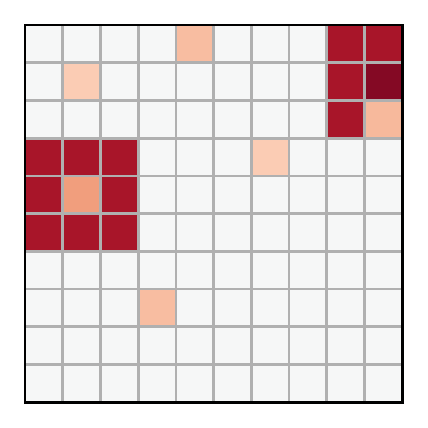}
      \includegraphics[width=0.18\textwidth]{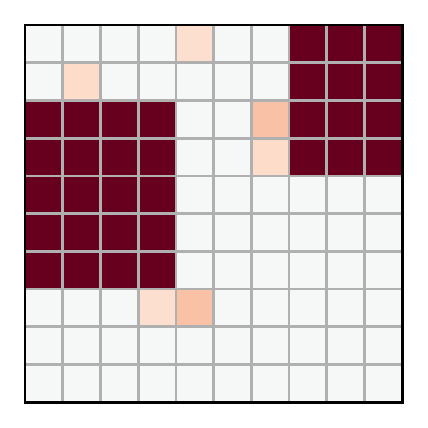}
      \includegraphics[width=0.18\textwidth]{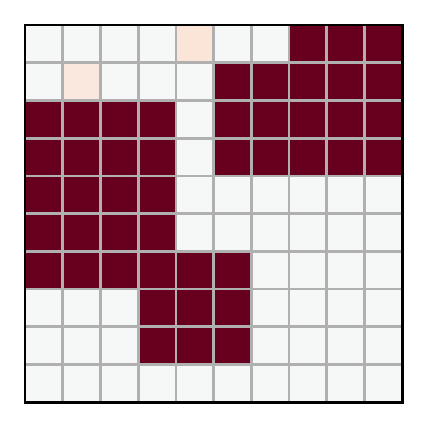}
      \includegraphics[width=0.18\textwidth]{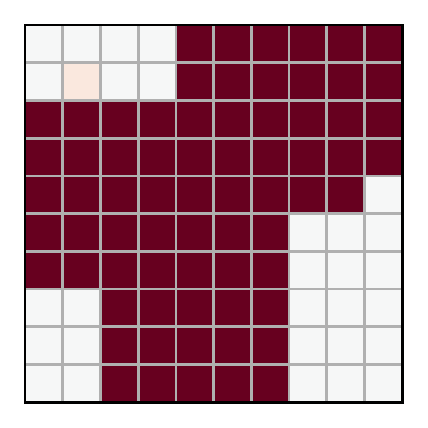}\vspace{-2mm}
      \caption{Learners' rewards inferred from learner-agnostic teacher's (\algAgn)
       demonstrations}
      \label{fig:exp-known-learner-preferences.b}       
    \end{subfigure}%
    \\
    \begin{subfigure}[t]{0.82\textwidth}
      \centering
      \includegraphics[width=0.18\textwidth]{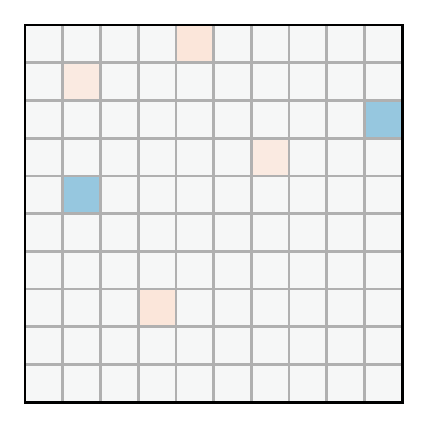}
      \includegraphics[width=0.18\textwidth]{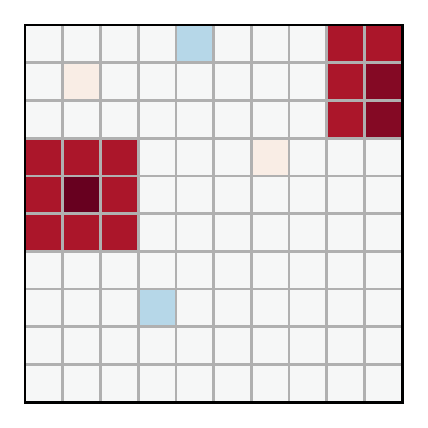}
      \includegraphics[width=0.18\textwidth]{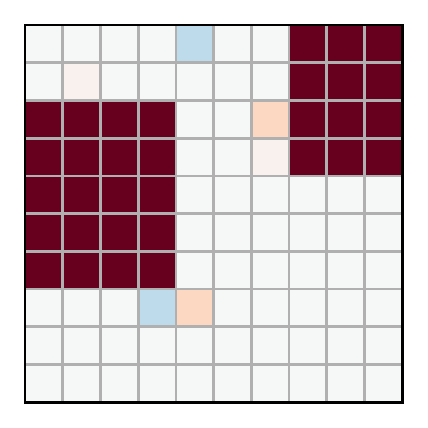}
      \includegraphics[width=0.18\textwidth]{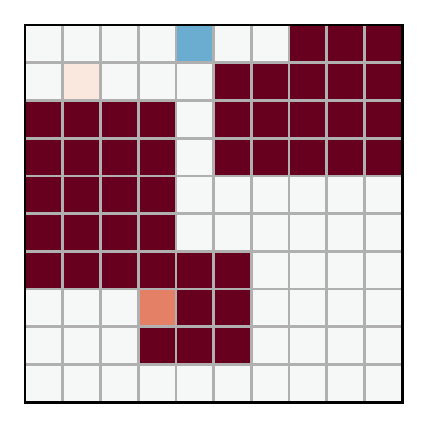}
      \includegraphics[width=0.18\textwidth]{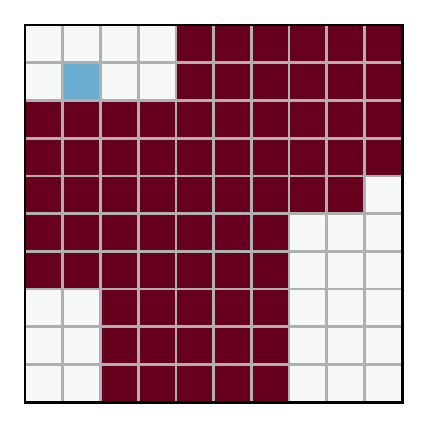}\vspace{-2mm}
      \caption{Learners' rewards inferred from learner-aware teacher's (\algAwareBiLevel) demonstrations}
      \label{fig:exp-known-learner-preferences.c}      
    \end{subfigure}%
    \caption{Teaching in object-world environments under full knowledge of the learner's preferences. Green and yellow cells indicate distractors associated with either ``star" or ``plus" objects, respectively. Learner's preferences to avoid cells are indicated in gray. Learner model from Section~\ref{sec:learner:soft} with $C_r=5$, $C_c=10$, and $\delta^{\hard}_c = 0$ is considered for these experiments. 
    The learner-aware teacher enable the learner to infer reward functions that are compatible with the learner's preferences and achieve higher average rewards.  In  Figure~\ref{fig:exp-known-learner-preferences.b} and Figure~\ref{fig:exp-known-learner-preferences.c}, blue color represents positive reward, red color represents negative reward, and the magnitude of the reward is indicated by color intensity.
    }
    \label{fig:exp-known-learner-preferences}
    \vspace{-3mm}
\end{figure}

\subsection{Teaching under known constraints}\label{sec:experiments.known}


\looseness-1
In this section we consider learners with soft constraints from Section~\ref{sec:learner:soft}, with preference features as described above, and parameters $C_r = 5$, $C_c=10$, and $\delta^{\hard}_c = 0$ (more experimental results for different values of $C_r$ and $C_c$ are provided in 
\iftoggle{longversion}{%
Appendix~\ref{appendix:experiments.known}).%
}
{%
Appendix B.1 of the supplementary).%
}
Our first results are presented in Figure~\ref{fig:exp-known-learner-preferences}.  The second and third rows show the rewards inferred by the learners for demonstrations provided by a learner-agnostic teacher who ignores any constraints (\algAgn) and the bi-level learner-aware teacher (\algAwareBiLevel), respectively.
We observe that \algAgn fails to teach the learner about objects' positive rewards in cases where the learners' preferences conflict with the position of the most rewarding objects (second row).
In contrast, \algAwareBiLevel always successfully teaches the learners about rewarding objects that are compatible with the learners' preferences (third row).

We also compare \algAgn and \algAwareBiLevel in terms of reward achieved by the learner after teaching for object worlds of size $10 \times 10$ in Table~\ref{tab:known-preferences-performance}.
The numbers show the average reward over 10 randomly generated object-worlds. 
Note that \algAwareBiLevel has to solve a non-convex optimization problem to find the optimal teaching policy, cf.\ Eq.~\ref{sec4.eq.bilevel-teacher}.
Because we use a gradient-based optimization approach, the teaching policies found can depend on the initial point for optimization.
Hence, we always consider the following two initial points for optimization and select the teaching policy which results in a higher objective value: (i) all optimization variables in Eq.~\ref{sec4.eq.bilevel-teacher} are set to zero, and
(ii) the optimization variables are initialized as $\alpha^\low[i]= \max\{w_{\bm{\lambda}}[i], 0 \}$, $\alpha^\up[i]= \max\{-w_{\bm{\lambda}}[i], 0 \}$, and $\bm{\beta}=0$, where $\bm{w}_{\bm{\lambda}}$ is as inferred by the learner when taught by \algAgn and $i \in \{ 1, \ldots, d_r\}$, cf.\ Section~\ref{sec:learner:soft}.
From Table~\ref{tab:known-preferences-performance} we observe that a learner can learn better policies from a teacher that accounts for the learner's preferences. 

\begin{table}[!htbp]
    \centering
    \caption{Learners' average rewards after teaching. \textsc{L1}, $\ldots$, \textsc{L5} correspond to learners with preferences as shown in Figure~\ref{fig:exp-known-learner-preferences}. Results are averaged over 10 random object-worlds, $\pm$ standard error}
    \begin{tabular}{r@{\hspace{0.4\tabcolsep}}c@{\hspace{0.4\tabcolsep}}cccccc}
         \toprule
          & & \multicolumn{5}{c}{\bfseries Learner ($C_r=5, C_c=10$)} \\\cmidrule{3-7} 
          & &  \textsc{L1} & \textsc{L2} & \textsc{L3} & \textsc{L4} & \textsc{L5} \\\midrule
         \multirow{2}{*}[-0.3em]{\bfseries Teacher}& \algAgn & $7.99\pm 0.02$ & $0.01 \pm 0.00$ & $0.01  \pm 0.00$ & $0.01  \pm 0.00$ & $0.00 \pm 0.00$ \\[0.5em]
         & \algAwareBiLevel & $8.00 \pm 0.02$ & $7.20 \pm 0.01$ & $4.86 \pm 0.30$ & $3.15 \pm 0.27$ & $1.30 \pm 0.07$ \\
         \bottomrule
    \end{tabular}
    \vspace{-3mm}
    \label{tab:known-preferences-performance}
\end{table}

\subsection{Teaching under unknown constraints}\label{sec:experiments.unknown}
In this section we evaluate the teaching algorithms from  Section~\ref{sec:teacher.unknown}. We consider the learner model from Section~\ref{sec:learner:hard} that uses $L^2$-projection to match reward feature expectations as studied in Section~\ref{sec:teacher.unknown}, cf. Eq.~\ref{sec3.1.eq.generic-learner--limitingcase}.\footnote{To implement the learner in Eq.~\ref{sec3.1.eq.generic-learner--limitingcase}, we approximated the learner's projection onto the set $\Omega_r^{\learner}$ as follows: We implemented the learner based on the optimization problem given in Eq.~\ref{sec3.eq.soft-learner} with a hard constraint on preferences and $L^2$ norm penalty on reward mismatch scaled with a large value of $C_r = 20$. \label{footnote.l2-projection}}
 For modeling the hard constraints, we consider  box-type linear constraints with $\delta^{\hard}_{c}[j]= 2.5 \ \forall j \in \{1, 2, \ldots, d_c\}$ for the preference features, cf. Eq.~\ref{sec3.eq.soft-learner}.

We study the learners L1, L2, and L3 with preferences corresponding to the first three object-worlds shown in Figure~\ref{fig:exp-known-learner-preferences.a}. We report the results for learner L2 below; results for learners L1 and L3 are deferred to the
\iftoggle{longversion}{%
Appendix~\ref{appendix:experiments.unknown}.%
}
{%
Appendix B.2 of the supplementary material.%
}

In this context it is instructive to investigate how quickly these adaptive teaching strategies converge to the performance of a teacher who has full knowledge about the learner. Results comparing the adaptive teaching strategies (\algAdAwareGreedy and \algAdAwareLine) are shown in Figure~\ref{fig:L2-exp-unknown-learner-preferences.plot}. We can observe that both teaching strategies get close to the best possible performance under full knowledge about the learner (\algAwareCMDP). We also provide results showing the performance achieved by the adaptive teaching strategies on object-worlds of varying sizes, see Figure~\ref{fig:L2-exp-unknown-learner-preferences.table}.  

Note that the performance of \algAdAwareGreedy{} decreases slightly when teaching for more rounds, i.e., comparing the results after 3 teaching rounds and at the end of the teaching process. This is because of approximations when learner is computing the policy via projection, which in turn leads to errors on the teacher side when approximating $\hat \Omega_r^{\learner}$ (refer to discussion in Footnote~\ref{footnote.l2-projection}). In contrast, \algAdAwareLine{} performance always increases when teaching for more rounds.

\begin{figure*}[!htb]
\centering
\begin{minipage}[b]{0.3\textwidth}
\begin{subfigure}[b]{1\textwidth}
\centering
    \includegraphics[width=5.5cm]{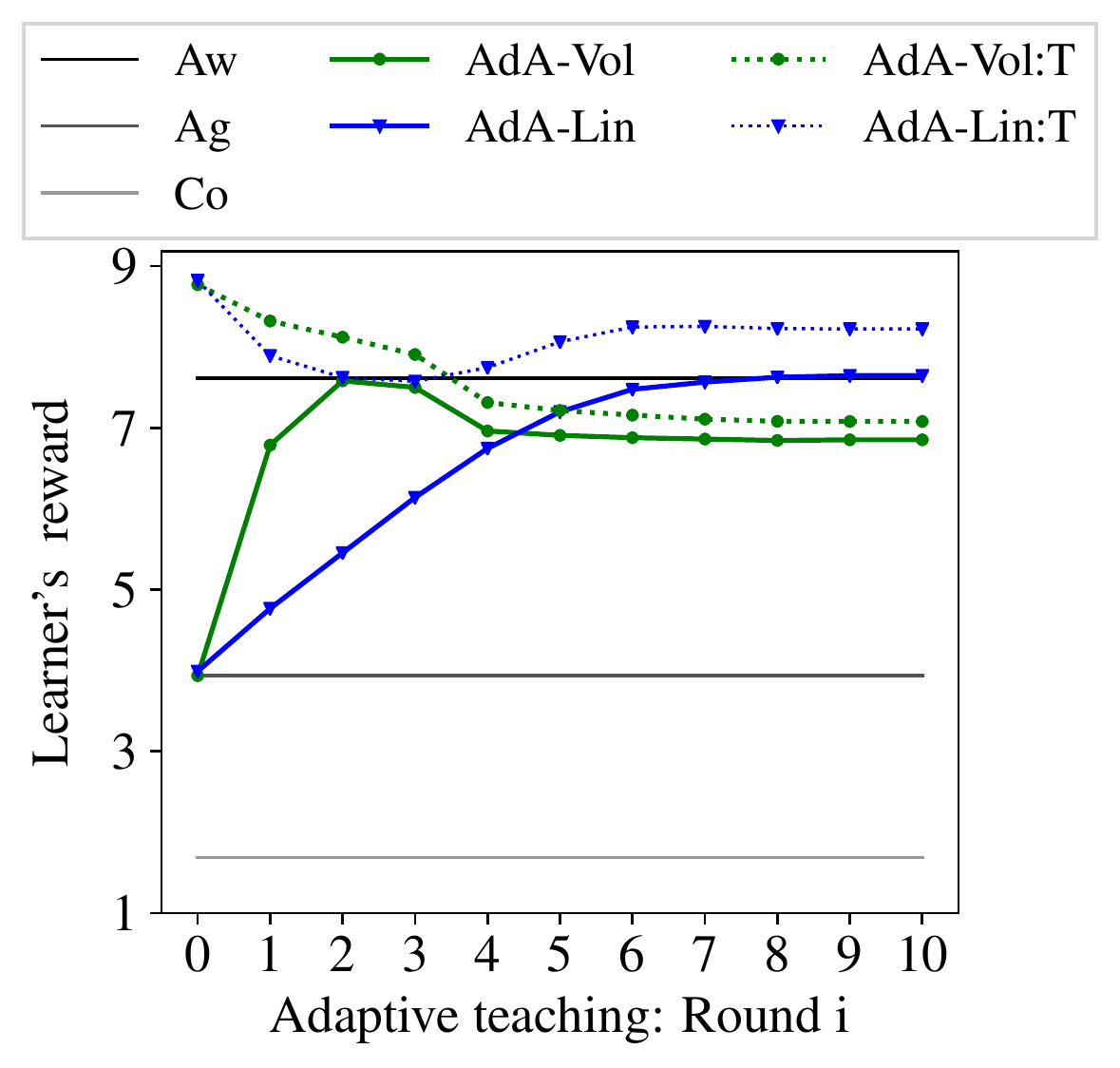}
    \caption{Reward over teaching rounds}
    \label{fig:L2-exp-unknown-learner-preferences.plot}
\end{subfigure}
\end{minipage}
\qquad \quad 
\begin{minipage}[b]{0.6\textwidth}
\begin{subfigure}[b]{1\textwidth}
\centering
\begingroup
\renewcommand{\arraystretch}{1.25} 
	\centering
    \centering
    \scalebox{0.815}{
    \begin{tabular}{c|cccc}
         \toprule
          \backslashbox{Teacher}{Env} &  $10\times10$ & $15\times15$ & $20\times20$ \\\midrule
          \algAwareCMDP & $7.62\pm 0.02$ & $7.44\pm 0.04$ & $7.19\pm 0.04$& \\
          \algAgn & $3.94\pm 0.09$& $3.84\pm 0.06$ & $3.95\pm 0.06$& \\
          \algCon & $1.68\pm 0.01$& $1.67\pm 0.012$ & $	1.62\pm 0.02$& \\
          \midrule
          \algAdAwareGreedy ($3^\textnormal{rd}$) & $7.50\pm 0.14$& $7.50\pm 0.04$ & $7.29\pm 0.05$ & \\
          \algAdAwareGreedy (end) & $6.85\pm 0.33$& $7.06\pm 0.06$ & $6.77\pm 0.08$& \\
          \midrule
          \algAdAwareLine ($3^\textnormal{rd}$) & $6.14\pm 	0.08$& $6.28\pm 0.10$ & $6.37\pm 0.08$& \\
          \algAdAwareLine (end) & $7.64\pm 0.02$& $7.53\pm 0.03$ & $7.29\pm 0.06$& \\
         \bottomrule
    \end{tabular}
    }
    \caption{Varying grid-size}
    \label{fig:L2-exp-unknown-learner-preferences.table}	
\endgroup
\end{subfigure}
\end{minipage}
\caption{Performance of adaptive teaching strategies \algAdAwareGreedy and \algAdAwareLine. \textbf{(left)} Figure~\ref{fig:L2-exp-unknown-learner-preferences.plot} shows the reward for learner's policy over number of teaching interactions. The horizontal lines indicate the performance of learner's policy for the learner-aware teacher with full knowledge of the learner's constraints \algAwareCMDP, the learner-agnostic teacher \algAgn who ignores any constraints, and a conservative teacher \algCon who considers all $6$ constraints (assuming the learner model L5 in Figure~\ref{fig:exp-known-learner-preferences}). Our adaptive teaching strategies \algAdAwareGreedy and \algAdAwareLine significantly outperform baselines (\algAgn and \algCon) and quickly converge towards the optimal performance of \algAwareCMDP. The dotted lines \algAdAwareGreedy:T and \algAdAwareLine:T show the rewards corresponding to teacher's policy at a round and are shown to highlight the very different behavior of two adaptive teaching strategies. \textbf{(right)} Table~\ref{fig:L2-exp-unknown-learner-preferences.table} shows results for varying grid-size of the environment. Results are reported at $i=3^\textnormal{rd}$ round and at the ``end" round when algorithm reaches it's stopping criterion. Results are reported as average over 10 runs $\pm$ standard error, where each run corresponds to a random environment.
\vspace{-2mm}}
\label{fig:L2-exp-unknown-learner-preferences}
\vspace{-2mm}
\end{figure*}

\section{Related Work}\label{sec:relatedwork}
Our work is closely related to algorithmic machine
teaching
\cite{goldman1995complexity,zhu2015machine,zhu_overview_2018}, whose
general goal is to design teaching algorithms that optimize the data
that is provided to a learning algorithm. Most works in machine
teaching so far focus on supervised learning tasks and assume that the learning algorithm is fully known to the teacher,
see e.g.,\ \cite{zhu2013machine,singla2014near,liu2016teaching,mac_aodha_teaching_2018}.

In the IRL setting, few works study how to provide maximally informative demonstrations to the learner, e.g., \cite{cakmak2012algorithmic,danielbrown2018irl}. 
In contrast to our work, their teacher fully knows the learner model and 
provides the demonstrations without any adaptation
to the learner. The question of how a teacher should
adaptively react to a learner has been addressed by
\cite{singla2013actively,liu2017blackbox,chen2018understanding,melo2018interactive,DBLP:conf/aaai/YeoKSMAFDC19,hunziker2018teaching}, but only in the supervised setting.  In a recent work, \cite{DBLP:conf/ijcai/KamalarubanDCS19} have studied the problem of adaptively teaching an IRL agent by providing an informative sequence of demonstrations. However, they assume that the teacher has full knowlege of the learner's dynamics.

%
%
Within the area of IRL, there is a line of work on active learning
approaches
\cite{cohn_comparing_2011,brown_risk-aware_2018,brown2018efficient,kareem2018_repeated,cui_active_2018},
which is related to our work.
In contrast to us, they take the perspective of the learner who
actively influences the demonstrations it receives. A few papers have
addressed  the problem that arises 
when the learner does not have full access to the
reward features, e.g.,
\cite{levine2010feature} and \cite{haug_teaching_2018}.



Our work is also loosely related to multi-agent reinforcement learning.
\cite{dimitrakakis2017multi} studied the interaction between agents with
misaligned models with a focus on the question of how to jointly optimize a
policy. \cite{ghosh19towardsrobust} studied the problem of designing robust AI agent that can interact with another agent of unknown type. However, these works do not tackle the problem of teaching an agent by demonstrations. Another related work is \cite{hadfield2016cooperative} which studied the cooperation of agents who do not perfectly understand each other.




\vspace{-3mm}
\section{Conclusions and Outlook}\label{sec:conclusions}
\vspace{-2mm}


In this paper we considered inverse reinforcement learning in the context of learners with preferences and constraints.
In this setting, the learner does not only focus on matching the teacher's demonstrated behavior but also takes its own preferences, e.g., behavioral biases or physical constraints, into account.
We developed a theoretical framework for this setting, and proposed and studied algorithms for learner-aware teaching in which the teacher accounts for the learner's preferences for the cases of known and unknown preference constraints.
We demonstrated significant performance improvements of our learner-aware teaching strategies as compared to learner-agnostic teaching both theoretically and empirically.
Our theoretical framework and our proposed algorithms foster the application of IRL in real-world settings in which the learner does not blindly follow a teacher's demonstrations.

There are several promising directions for future work, including but not limited to: The evaluation of our approach in machine-human and human-machine tasks; extensions of our approach to other learner models; approaches for learning efficiently from a learner's point of view from a fixed set of (potentially suboptimal) demonstrations in the case of preference constraints.
 






\clearpage
\subsubsection*{Acknowledgements}
This work was supported by Microsoft Research through its PhD Scholarship Programme.

\bibliography{references}

\iftoggle{longversion}{
\clearpage
\onecolumn
\appendix 
{\allowdisplaybreaks
\section{List of Appendices}\label{appendix:table-of-contents}
In this section we provide a brief description of the content provided in the appendices of the paper.   
\begin{itemize}
\item Appendix~\ref{appendix:experiments} provides additional experimental results (Section~\ref{sec:experiments}).
\item Appendix~\ref{appendix:teacher.unknown} provides additional details on the adaptive teaching strategies (Section~\ref{sec:teacher.unknown}).
\item Appendix~\ref{appendix:background-mceirl} provides background on the (discounted)   MCE-IRL problem (Section~\ref{sec:learner}).
\item Appendix~\ref{appendix:mceirl-with-preferences} provides additional details on the (discounted) MCE-IRL problem with preferences  (Section~\ref{sec:learner:soft}).
\item Appendix~\ref{appendix:lp} provides the LP formulation for the teacher \algAwareCMDP (Section~\ref{sec:teacher.fullknowledge.special}).
\item Appendix~\ref{appendix:bi-level} provides additional details on the bi-level optimization approach for the teacher \algAwareBiLevel (Section \ref{sec:teacher.fullknowledge.generic}).
\end{itemize}

\clearpage
\section{Experimental Evaluation: Additional Results (Section~\ref{sec:experiments})}
\label{appendix:experiments}

\subsection{Teaching under known constraints (Section~\ref{sec:experiments.known})}\label{appendix:experiments.known}

Additional results for teaching under known constraints are presented in Table~\ref{tab:known-preferences-performance-additional-results}.
We observe that \algAwareBiLevel clearly outperforms \algAgn for most combinations of $C_r$ and $C_c$.
Only for $C_r=10, C_c = 1$, the teachers \algAwareBiLevel and \algAgn achieve similar performance because $C_r \gg C_c$, and hence the learner values achieving higher reward more than satisfying its preferences.

\begin{table}[!htbp]
    \centering
    \caption{Learners' average rewards after teaching. \textsc{L1}, $\ldots$, \textsc{L5} correspond to learners with preferences as shown in Figure~\ref{fig:exp-known-learner-preferences}. Results are averaged over 10 random object-worlds, $\pm$ standard error}
    \begin{tabular}{r@{\hspace{0.4\tabcolsep}}c@{\hspace{0.4\tabcolsep}}cccccc}
         \toprule
          & & \multicolumn{5}{c}{\bfseries Learner ($C_r=5, C_c=10$)} \\\cmidrule{3-7} 
          & &  \textsc{L1} & \textsc{L2} & \textsc{L3} & \textsc{L4} & \textsc{L5} \\\midrule
         \multirow{2}{*}[-0.3em]{\bfseries Teacher}& \algAgn & $7.99\pm 0.02$ & $0.01 \pm 0.00$ & $0.01  \pm 0.00$ & $0.01  \pm 0.00$ & $0.00 \pm 0.00$ \\[0.5em]
         & \algAwareBiLevel & $8.00 \pm 0.02$ & $7.20 \pm 0.01$ & $4.86 \pm 0.30$ & $3.15 \pm 0.27$ & $1.30 \pm 0.07$ \\
         \bottomrule
    \end{tabular}
    \\[3mm]
   \begin{tabular}{r@{\hspace{0.4\tabcolsep}}c@{\hspace{0.4\tabcolsep}}cccccc}
         \toprule
          & & \multicolumn{5}{c}{\bfseries Learner ($C_r=10, C_c=10$)} \\\cmidrule{3-7} 
          & &  \textsc{L1} & \textsc{L2} & \textsc{L3} & \textsc{L4} & \textsc{L5} \\\midrule
         \multirow{2}{*}[-0.3em]{\bfseries Teacher}& \algAgn & $8.34\pm 0.01$ & $0.17 \pm 0.02$ & $0.01  \pm 0.00$ & $0.01  \pm 0.00$ & $0.00 \pm 0.00$ \\[0.5em]
         & \algAwareBiLevel & $8.33 \pm 0.01$ & $6.90 \pm 0.17$ & $5.03 \pm 0.31$ & $3.27 \pm 0.28$ & $1.35 \pm 0.07$ \\
         \bottomrule
    \end{tabular}
    \\[3mm]
   \begin{tabular}{r@{\hspace{0.4\tabcolsep}}c@{\hspace{0.4\tabcolsep}}cccccc}
         \toprule
          & & \multicolumn{5}{c}{\bfseries Learner ($C_r=10, C_c=5$)} \\\cmidrule{3-7} 
          & &  \textsc{L1} & \textsc{L2} & \textsc{L3} & \textsc{L4} & \textsc{L5} \\\midrule
         \multirow{2}{*}[-0.3em]{\bfseries Teacher}& \algAgn & $8.36\pm 0.01$ & $8.14 \pm 0.03$ & $0.01  \pm 0.00$ & $0.01  \pm 0.00$ & $0.00 \pm 0.00$ \\[0.5em]
         & \algAwareBiLevel & $8.34 \pm 0.01$ & $8.13 \pm 0.03$ & $5.20 \pm 0.29$ & $3.43 \pm 0.27$ & $1.69 \pm 0.0$ \\
         \bottomrule
    \end{tabular}
    \\[3mm]
   \begin{tabular}{r@{\hspace{0.4\tabcolsep}}c@{\hspace{0.4\tabcolsep}}cccccc}
         \toprule
          & & \multicolumn{5}{c}{Learner ($C_r=5, C_c=5$)} \\\cmidrule{3-7} 
          & &  \textsc{L1} & \textsc{L2} & \textsc{L3} & \textsc{L4} & \textsc{L5} \\\midrule
         \multirow{2}{*}[-0.3em]{\bfseries Teacher}& \algAgn & $7.99\pm 0.02$ & $0.17 \pm 0.02$ & $0.01  \pm 0.00$ & $0.01  \pm 0.00$ & $0.00 \pm 0.00$ \\[0.5em]
         & \algAwareBiLevel & $8.00 \pm 0.02$ & $6.64 \pm 0.17$ & $4.87 \pm 0.30$ & $3.16 \pm 0.27$ & $1.31 \pm 0.06$ \\
         \bottomrule
    \end{tabular}
    \\[3mm]
   \begin{tabular}{r@{\hspace{0.4\tabcolsep}}c@{\hspace{0.4\tabcolsep}}cccccc}
         \toprule
          & & \multicolumn{5}{c}{\bfseries Learner ($C_r=10, C_c=1$)} \\\cmidrule{3-7} 
          & &  \textsc{L1} & \textsc{L2} & \textsc{L3} & \textsc{L4} & \textsc{L5} \\\midrule
         \multirow{2}{*}[-0.3em]{\bfseries Teacher}& \algAgn & $8.36\pm 0.01$ & $8.39 \pm 0.02$ & $8.46  \pm 0.02$ & $8.46  \pm 0.02$ & $8.49 \pm 0.02$ \\[0.5em]
         & \algAwareBiLevel & $8.33 \pm 0.01$ & $8.36 \pm 0.03$ & $8.44 \pm 0.02$ & $8.44 \pm 0.02$ & $8.46 \pm 0.02$ \\
         \bottomrule
    \end{tabular}
    \\[3mm]
   \begin{tabular}{r@{\hspace{0.4\tabcolsep}}c@{\hspace{0.4\tabcolsep}}cccccc}
         \toprule
          & & \multicolumn{5}{c}{\bfseries Learner ($C_r=1, C_c=10$)} \\\cmidrule{3-7} 
          & &  \textsc{L1} & \textsc{L2} & \textsc{L3} & \textsc{L4} & \textsc{L5} \\\midrule
         \multirow{2}{*}[-0.3em]{\bfseries Teacher}& \algAgn & $5.67\pm 0.02$ & $0.15 \pm 0.02$ & $0.16  \pm 0.02$ & $0.11  \pm 0.01$ & $0.08 \pm 0.01$ \\[0.5em]
         & \algAwareBiLevel & $5.93 \pm 0.02$ & $4.49 \pm 0.15$ & $3.56 \pm 0.24$ & $2.30 \pm 0.22$ & $0.93 \pm 0.05$ \\
         \bottomrule
    \end{tabular}
    
    \label{tab:known-preferences-performance-additional-results}
\end{table}

\clearpage
\subsection{Teaching under unknown constraints (Section~\ref{sec:experiments.unknown})}\label{appendix:experiments.unknown}
Here, we provide additional experimental results for teaching algorithms from  Section~\ref{sec:teacher.unknown}. In particular, we report on the results for learner L1 and learner L3, similar to the results for learner L2 reported in Section~\ref{sec:experiments.unknown}.


\begin{figure*}[!htb]
\centering
\begin{minipage}[b]{0.3\textwidth}
\begin{subfigure}[b]{1\textwidth}
\centering
    \includegraphics[width=5.5cm]{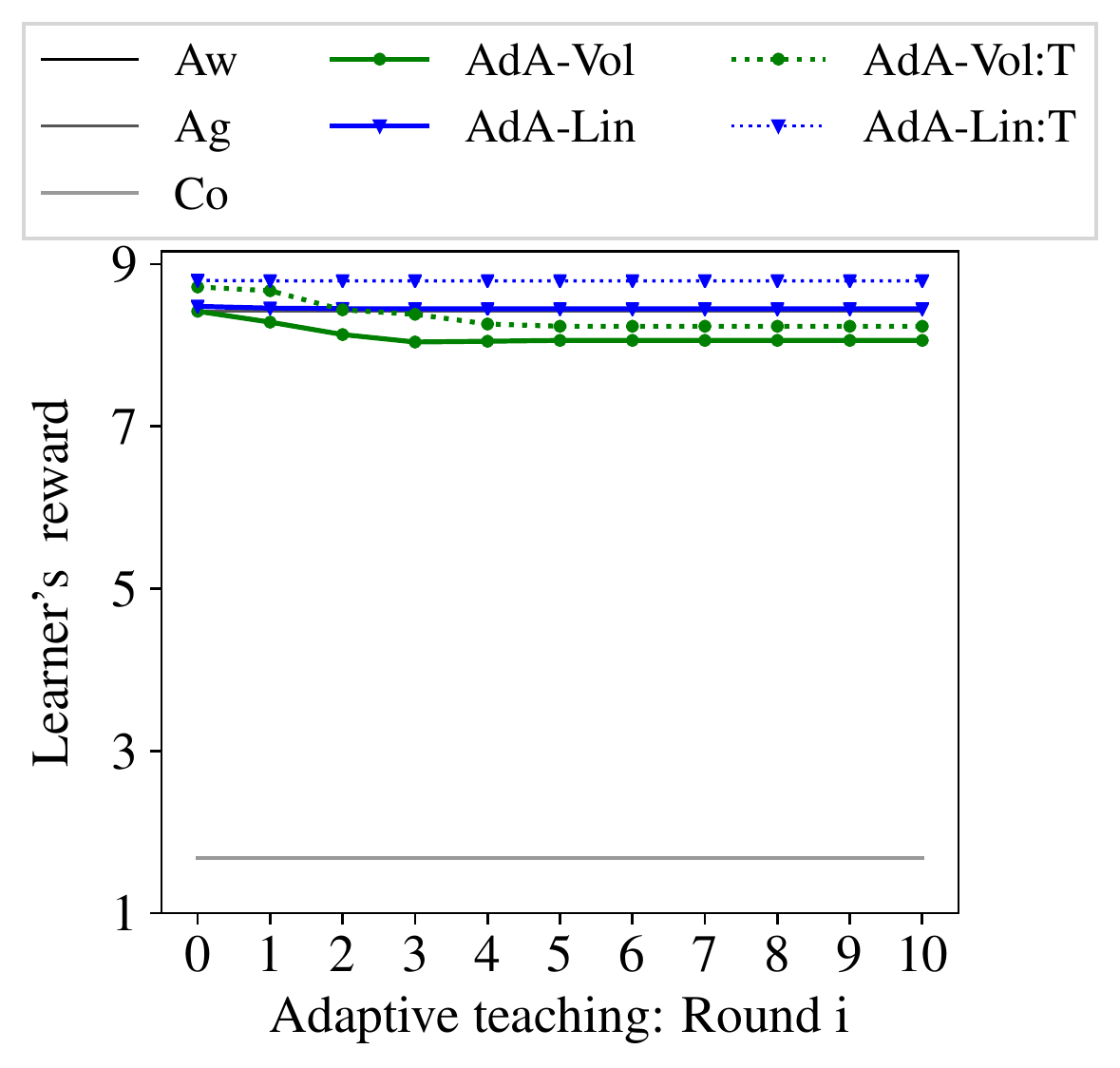}
    \caption{Reward over teaching rounds}
    \label{fig:L1-exp-unknown-learner-preferences.plot}
\end{subfigure}
\end{minipage}
\qquad \quad 
\begin{minipage}[b]{0.6\textwidth}
\begin{subfigure}[b]{1\textwidth}
\centering
\begingroup
\renewcommand{\arraystretch}{1.25} 
	\centering
    \centering
    \scalebox{0.815}{
    \begin{tabular}{c|cccc}
         \toprule
          \backslashbox{Teacher}{Env} &  $10\times10$ & $15\times15$ & $20\times20$ \\\midrule
          \algAwareCMDP & $8.42\pm 0.03$ & $8.24\pm 0.05$ & $7.84\pm 0.08$& \\
          \algAgn & $8.42\pm 0.03$& $8.24\pm 0.05$ & $7.84\pm 0.08$& \\
          \algCon & $1.68\pm 0.1$& $1.66\pm 0.01$ & $1.65\pm 0.02$& \\
          \midrule
          \algAdAwareGreedy ($3^\textnormal{rd}$) & $8.04\pm 0.02$& $7.83\pm 0.04$ & $7.46\pm 0.07$ & \\
          \algAdAwareGreedy (end) & $8.06\pm 0.02$& $7.80\pm 0.08$ & $7.30\pm 0.12$& \\
          \midrule
          \algAdAwareLine ($3^\textnormal{rd}$) & $8.44\pm 0.04$& $8.23\pm 0.07$ & $8.08\pm 0.08$& \\
          \algAdAwareLine (end) & $8.44\pm 0.04$& $8.23\pm 0.07$ & $8.08\pm 0.08$& \\
         \bottomrule
    \end{tabular}
    }
    \caption{Varying grid-size}
\endgroup
\end{subfigure}
\end{minipage}
\label{fig:L1-exp-unknown-learner-preferences}
\caption{Results for learner L1}
\end{figure*}


\begin{figure*}[!htb]
\centering
\begin{minipage}[b]{0.3\textwidth}
\begin{subfigure}[b]{1\textwidth}
\centering
    \includegraphics[width=5.5cm]{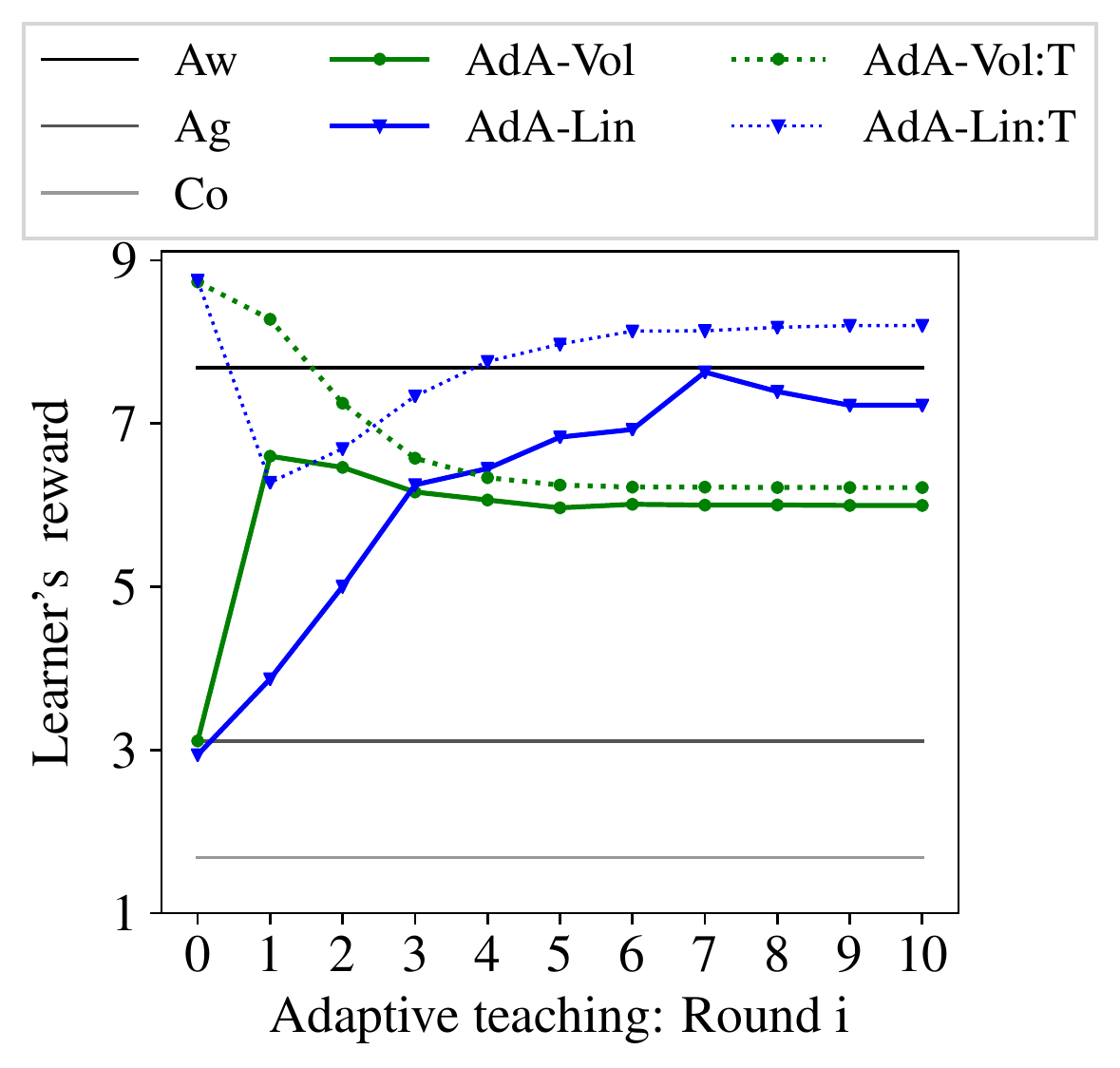}
    \caption{Reward over teaching rounds}
    \label{fig:L3-exp-unknown-learner-preferences.plot}
\end{subfigure}
\end{minipage}
\qquad \quad 
\begin{minipage}[b]{0.6\textwidth}
\begin{subfigure}[b]{1\textwidth}
\centering
\begingroup
\renewcommand{\arraystretch}{1.25} 
	\centering
    \centering
    \scalebox{0.815}{    
    \begin{tabular}{c|cccc}
         \toprule
          \backslashbox{Teacher}{Env} &  $10\times10$ & $15\times15$ & $20\times20$ \\\midrule
          \algAwareCMDP & $7.68\pm 0.04$ & $7.35\pm 0.03$ & $7.39\pm 0.09$& \\
          \algAgn & $3.11\pm 0.08$& $3.12\pm 0.07$ & $3.26\pm 0.14$& \\
          \algCon & $1.68\pm 0.01$& $1.65\pm 0.01$ & $1.62\pm 0.01$& \\
          \midrule
          \algAdAwareGreedy ($3^\textnormal{rd}$) & $6.16\pm 0.42$& $5.72\pm 0.54$ & $6.39\pm 0.32$ & \\
          \algAdAwareGreedy (end) & $5.99\pm 0.46$& $5.38\pm 0.56$ & $6.16\pm 0.31$& \\
          \midrule
          \algAdAwareLine ($3^\textnormal{rd}$) & $6.25\pm 0.20$& $5.13\pm 0.50$ & $6.15\pm 	0.11$& \\
          \algAdAwareLine (end) & $7.22\pm 0.16$& $5.83\pm 0.62$ & $7.09\pm 0.07$& \\
         \bottomrule
    \end{tabular}
    }
    \caption{Varying grid-size}
    \label{fig:L3-exp-unknown-learner-preferences.table}	
\endgroup
\end{subfigure}
\end{minipage}
\caption{Results for learner L3}
\label{fig:L3-exp-unknown-learner-preferences}
\end{figure*}

\clearpage
\section{Details for Learner-Aware Teaching under Unknown Constraints
(Section~\ref{sec:teacher.unknown})}\label{appendix:teacher.unknown}

In this appendix, we provide more details on the adaptive teaching
algorithms \algAdAwareGreedy and \algAdAwareLine described in Sections
\ref{sec:teacher.unknown.adaptive-greedy} and
\ref{sec:teacher.unknown.adaptive-blackbox}. Recall that both teaching
algorithms are obtained from Algorithm \ref{algo:interaction1} by
defining the way in which the teacher $\teacher$ adapts the teaching policy based
on the learner $\learner$'s feature expectations $\mu_r^{\learner}$ in past
rounds.

\subsection{Details for \algAdAwareGreedy (Section \ref{sec:teacher.unknown.adaptive-greedy})}
\label{sec:appendix.teacher.unknown.greedy}
\paragraph{Estimation of the learner's constraint set.} In
\algAdAwareGreedy, $\teacher$ maintains an estimate
$\hat \Omega_r^{\learner, i}$ of $\learner$'s constraint set, starting
with $\hat \Omega_r^{\learner, 0} = \Omega_r$. After observing the
feature expectations $\mu_r^{\learner, i}$ of the policy $\learner$
found in round $i$, $\teacher$ updates this estimate as follows:
\begin{equation}
    \label{eq:update-Omega}
    \hat \Omega_r^{\learner, i + 1} ~:=~  \hat
    \Omega_r^{\learner, i} \cap
    \{\mu_r^{\learner, i} + \nu \in \R^{d_r} ~|~ \langle \mu_r^{\teacher,
        i} - \mu_r^{\learner, i}, \nu\rangle \leq 0 \}
\end{equation}
The set on the right hand side of \eqref{eq:update-Omega} with which
$\Omega_r^{\learner, i}$ gets intersected is a halfspace 
containing $\Omega_r^{\learner}$. This is due to the fact that
$\Omega_r^\learner$ is convex by assumption, and to our assumption that
$\learner$'s learning algorithm is such that it outputs a policy whose
feature expectations $\mu_r^{\learner, i}$ match the $L^2$-projection
of $\mu_r^{\teacher, i}$ to $\Omega_r^{\learner}$. Inductively, it
follows that $\hat \Omega_r^{\learner, i} \supset \Omega_r^{\learner}$
for all $i$.

In practice, we implement a slightly modified version of the update
step in which we intersect $\hat \Omega_r^{\learner, i}$ with a
halfspace that is shifted in the direction of
$\mu_r^{\teacher, i} - \mu_r^{\learner, i}$ by a small amount, i.e., we use
\begin{equation*}
    \{\mu_r^{\learner, i} + (1- \eta) (\mu_r^{\teacher, i} -
    \mu_r^{\learner, i}) + \nu \in \R^{d_r} ~|~ \langle \mu_r^{\teacher,
        i} - \mu_r^{\learner, i}, \nu\rangle \leq 0 \}
\end{equation*}
with a step size parameter $\eta \in (0,1)$. This helps make the algorithm
more robust to noise in the learner's feature expectations. In our
experiments, we used $\eta = 0.9$.

\paragraph{Update of the teaching policy.} After updating the estimate
of the learner's constraint set to $\hat \Omega_r^{\learner, i}$,
$\teacher$ solves a constrained MDP in order to find
\begin{equation*}
    \pi^{\teacher, i + 1} \in \argmax_{\pi, \mu_r(\pi) \in \hat
        \Omega_r^{\learner, i}} R(\pi).
\end{equation*}

Given that $\hat \Omega_r^{\learner, i}$ is cut out by linear
equations, solving the constrained MDP reduces to solving an LP, as
described in Appendix \ref{appendix:lp}.

\paragraph{Termination of the interaction.} The algorithm terminates
as soon as the stopping criterion
$\Vert \mu_r^{\learner, i} - \mu_r^{\teacher, i} \Vert_2 \leq
\epsilon$ is satisfied. Note that
$\hat \Omega_r^{\learner, i} \supset \Omega_r^{\learner}$ implies that
\begin{equation*}
    R(\pi^{\teacher, i}) \geq R(\pi^\textnormal{aware})
\end{equation*}
for any
$\pi^\textnormal{aware} \in \argmax_{\pi, \mu_r(\pi) \in \Omega_r^{\learner}}
R(\pi)$. Therefore, after termination we have
\begin{equation*}
    R(\pi^{\learner, i}) \geq R(\pi^\textnormal{aware}) - \epsilon
\end{equation*}
for any policy $\pi^\textnormal{aware}$ which is optimal under $\learner$'s
constraints, which is the first statement of Theorem~\ref{thm:teacher.unknown.adaptive-greedy}.

The second statement of Theorem
\ref{thm:teacher.unknown.adaptive-greedy} follows from the fact that
if $\Omega_r^{\learner}$ is a convex polytope cut out by $m$ linear
inequalities, the number of faces, which is in $O(m^{d_r})$, is 
an upper bound on the number of iterations of the
algorithm, because one face is ``eliminated'' in each round.

\subsection{Details for \algAdAwareLine (Section \ref{sec:teacher.unknown.adaptive-blackbox})}
\label{appendix:teacher.unknown.linesearch}

\begin{figure}[t]\centering
    \begin{minipage}[b]{0.5\linewidth}
        \begin{algorithm}[H]
            \caption{\algLineSearch}
            \label{algo:line-search}
            \begin{algorithmic}[1]
                \Require $\mu_r^{\learner}$, $\alpha_{\min}$, $\alpha_{\max}$,
                $\varepsilon_\alpha$, $\varepsilon_{\mu}$.
                \State $\alpha_u \leftarrow \alpha_{\max}$, $\alpha_l \leftarrow \alpha_{\min}$
                \While{$\alpha_u - \alpha_l > \varepsilon_\alpha$}
                \State $\alpha \leftarrow (\alpha_u + \alpha_l)/2$
                \State
                $\pi^{\teacher} \leftarrow
                \textsc{IRL}(\mu_r^{\learner} + \alpha \wopt)$
                \If{$\Vert \mu_r(\pi^{\teacher}) - \mu_r^{\learner}
                    - \alpha \wopt \Vert_2 >
                    \varepsilon_\mu$}
                \State $\alpha_u \leftarrow \alpha$
                \Else
                \State $\alpha_l \leftarrow \alpha$
                \EndIf{}
                \EndWhile{}
                \If{$\Vert \mu_r(\pi^{\teacher}) - \mu_r^{\learner} - \alpha \wopt \Vert_2 >
                    \varepsilon_\mu$}
                \State
                $\pi^{\teacher} \leftarrow
                \textsc{IRL}(\mu_r^{\learner} + \alpha_{\min}
                \wopt)$
                \EndIf{}
                \State \Return $\pi^\teacher$
            \end{algorithmic}
        \end{algorithm}
    \end{minipage}
    \hfill
    \begin{minipage}[b]{0.48\linewidth}
        \begin{figure}[H]
            \centering
            \includegraphics{./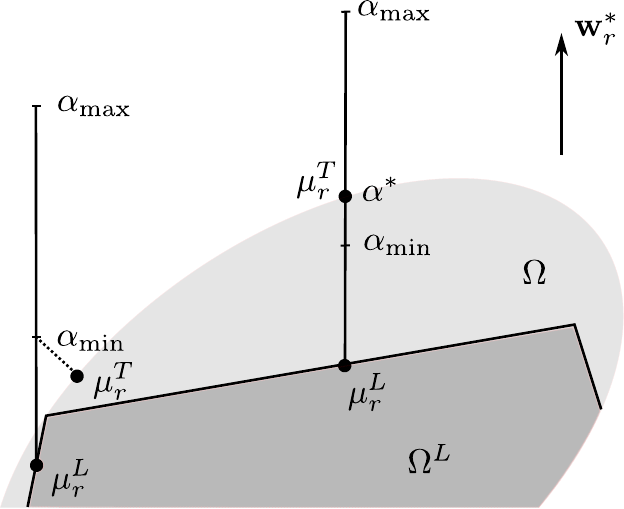}
            \caption{
            }
            \label{fig:line-search}
        \end{figure}
    \end{minipage}
    \caption*{\algLineSearch is the algorithm that $\teacher$ uses in
        order to find a teaching policy $\pi^{\teacher}$ provided that
        the feature expectations of $\learner$'s current policy are
        $\mu_r^\learner$.  Figure \ref{fig:line-search} illustrates
        the two cases may occur: For the right $\mu_r^{\learner}$,
        \algLineSearch returns a policy $\pi^{\teacher}$ whose feature
        expectations satisfy
        $\mu_r^{\teacher} = \mu_r^{\learner} + \alpha^* \wopt$ such
        that $\alpha^* > \alpha_{\min}$. For the left
        $\mu_r^{\learner}$, \algLineSearch returns a policy
        $\pi^{\teacher}$ whose feature expectations satisfy
        $\mu_r^{\teacher} \in \argmin_{\mu_r \in \Omega_r} \Vert \mu_r
        - \mu_r^{\learner} + \alpha_{\min} \mu_r^{\teacher}\Vert$.}
\end{figure}

In \algAdAwareLine, $\teacher$ updates the teaching policy
$\pi^{\teacher, i+1}$ based on $\learner$'s feature expectations
$\mu_r^{\learner, i}$ from the previous round. To do so, $\teacher$
uses \algLineSearch (Algorithm \ref{algo:line-search}) to perform a
binary search on the line segment
\begin{equation}
    \label{eq:line-segment}
    \{\mu_r^{\learner, i} + \alpha \wopt ~|~ \alpha \in [\alpha_{\min},
    \alpha_{\max}]\} \subset \R^{d_r}
\end{equation}
in order to find a vector $\mu_r$ that is realizable as the vector of
feature expectations of a policy. 
If the intersection of the line segment \eqref{eq:line-segment} with
$\Omega_r$ is non-empty, it is of the form
$\{\mu_r^{\learner} + \alpha \wopt ~|~ \alpha \in [\alpha_{\min},
\alpha^*]\}$ for some $\alpha^* \leq \alpha_{\max}$ due to the
convexity of $\Omega_r$. In that case, \algLineSearch returns a policy
with feature expectations
\begin{equation*}
    \mu_r^{\teacher, i+1} = \mu_r^{\learner, i} + \alpha_i^*
    \wopt,
\end{equation*}
where $\alpha_i^*$ is the maximal
$\alpha \in [\alpha_{\min}, \alpha_{\max}]$ such that
$\mu_r^{\learner, i} + \alpha \wopt \in \Omega_r$. If the intersection
is empty, \algLineSearch returns a policy with feature expectations
\begin{equation*}
    \mu_r^{\teacher, i + 1} \in \argmin_{\mu_r \in \Omega_r} \Vert \mu_r
    - \mu_r^{\learner, i} - \alpha_{\min} \wopt \Vert_2.
\end{equation*}
Figure \ref{fig:line-search} illustrates the two cases that
may occur.




\subsubsection{Proof of Theorem \ref{thm:convergence}}
In this section, we provide the proof of Theorem \ref{thm:convergence}, which
gives a guarantee on the improvement of $\learner$'s performance in each round of
the \algAdAwareLine algorithm. The assumption we make here is that, in every
teaching round, \algLineSearch returns a teaching policy
$\pi^{\teacher, i+1}$ such that
$\mu_r^{\teacher, i+1} = \mu_r^{\learner, i} + \alpha_i \wopt$ for
some $\alpha_i \geq \alpha_{\min}$, where $\alpha_{\min} > 0$ is a
fixed constant. It is easy to see that this assumption, together with our assumption
on $\learner$'s algorithm and the convexity of $\Omega_r^{\learner}$,
imply that the change in learner performance
\begin{equation*}
    \Delta R_i := R(\mu_r^{\learner, i+1}) - R(\mu_r^{\learner,
        i})
\end{equation*}
is non-negative in every teaching round. The following proposition,
which will be needed in the proof of Theorem \ref{thm:convergence},
strengthens this statement:


\begin{proposition}
    \label{prop:perf-increase}
    Let $\overline R_\learner := \max_{\mu_r \in \Omega_r^{\learner}} R(\mu_r)$ be
    the maximally achievable learner performance. Assume that, in
    teaching round $i$, $\teacher$ can find a teaching policy
    $\pi^{\teacher, i+1}$ whose feature expectations satisfy
    $\mu_r^{\teacher, i+1} = \mu_r^{\learner, i} + \alpha_i \wopt$ for
    some $\alpha_i > 0$. Then
    \begin{equation}
        \label{eq:gap-estimate}
        \overline R_\learner - R(\mu_r^{\learner, i}) \leq 
        \Delta R_i + D \cdot \sqrt{\frac{\Delta R_i}{\alpha_i - \Delta
                R_i}},
    \end{equation}
    where $D = \diam \Omega_r$.
\end{proposition}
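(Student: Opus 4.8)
The plan is to reduce everything to two applications of the first-order optimality condition for Euclidean projection onto the convex set $\Omega_r^{\learner}$. Write $a := \mu_r^{\learner, i}$, $b := \mu_r^{\teacher, i+1} = a + \alpha_i \wopt$, and $c := \mu_r^{\learner, i+1}$; by the assumed learner model, $c$ is the $L^2$-projection of $b$ onto $\Omega_r^{\learner}$. Let $\mu^* \in \argmax_{\mu_r \in \Omega_r^{\learner}} R(\mu_r)$, so that $\overline R_\learner = \ipp{\wopt}{\mu^*}$, and treat $\wopt$ as a unit direction vector (as it is used along the line search). Since $R$ is linear, $\Delta R_i = \ipp{\wopt}{c - a}$, and I would split the gap as $\overline R_\learner - R(a) = \ipp{\wopt}{\mu^* - c} + \Delta R_i$, so the whole task is to bound the first summand by $D\sqrt{\Delta R_i/(\alpha_i - \Delta R_i)}$.

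The projection characterization states $\ipp{b - c}{x - c} \leq 0$ for every $x \in \Omega_r^{\learner}$. First I would instantiate it at $x = a$. Writing $u := c - a$ and $b - c = \alpha_i \wopt - u$, this yields $\norm{u}^2 \leq \alpha_i \ipp{\wopt}{u} = \alpha_i \Delta R_i$ (which in passing re-proves $\Delta R_i \geq 0$, i.e.\ the claimed monotonicity). Decomposing $u = \Delta R_i \, \wopt + u_\perp$ with $u_\perp \perp \wopt$, so that $\norm{u_\parallel}^2 = \Delta R_i^2$, then gives the key estimate $\norm{u_\perp}^2 \leq (\alpha_i - \Delta R_i)\Delta R_i$.

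Next I would instantiate the projection inequality at $x = \mu^*$. The point is that, after removing the parallel part absorbed into $u$, the vector $b - c$ equals $(\alpha_i - \Delta R_i)\wopt - u_\perp$; hence $(\alpha_i - \Delta R_i)\ipp{\wopt}{\mu^* - c} \leq \ipp{u_\perp}{\mu^* - c}$. Applying Cauchy--Schwarz, bounding $\norm{\mu^* - c} \leq \diam \Omega_r = D$ (both points lie in $\Omega_r$), and substituting $\norm{u_\perp} \leq \sqrt{(\alpha_i - \Delta R_i)\Delta R_i}$ produces $\ipp{\wopt}{\mu^* - c} \leq D\sqrt{\Delta R_i/(\alpha_i - \Delta R_i)}$, which combined with the splitting above is exactly the claim. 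To divide by $\alpha_i - \Delta R_i$ I would note it is nonnegative (it equals $\norm{u_\perp}^2/\Delta R_i \geq 0$, and the case $\alpha_i = \Delta R_i$ makes the right-hand side vacuous) and that $\ipp{\wopt}{\mu^* - c} \geq 0$ because $\mu^*$ maximizes the linear reward over $\Omega_r^{\learner} \ni c$.

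The main obstacle is really the single bookkeeping subtlety of getting the denominator right: it is tempting to project $b - a = \alpha_i \wopt$ directly and divide by $\alpha_i$, but the stated constant comes from recognizing that the $\wopt$-component of $b - c$ is only $\alpha_i - \Delta R_i$, and that the relevant orthogonal deviation $\norm{u_\perp}$ (rather than the full $\norm{u}$) satisfies $\norm{u_\perp}^2 \leq (\alpha_i - \Delta R_i)\Delta R_i$. Matching these two appearances of $\alpha_i - \Delta R_i$ is what yields the bound; everything else is Cauchy--Schwarz and the diameter bound.
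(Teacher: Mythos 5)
Your proof is correct, and it reaches the paper's bound by a more algebraic route than the paper's own argument. The paper works synthetically in the $2$-plane $V$ spanned by $\mu_r^{\learner,i}$, $\mu_r^{\teacher,i+1}$, $\mu_r^{\learner,i+1}$: it introduces an auxiliary point $\tilde\mu_r$ forming a right triangle, shows by a convexity/contradiction argument that $\mu_r^{\learner,i+1}$ must lie on the altitude segment of length $h=\sqrt{(\alpha_i-\Delta R_i)\Delta R_i}$ (Pythagoras), and then converts the supporting-hyperplane property of the projection into a ``slope'' bound $\overline R_\learner \le R(\mu_r^{\learner,i+1}) + D\, s_\ell$ with $s_\ell \le \Delta R_i/h$. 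Your two instantiations of the variational inequality $\ipp{b-c}{x-c}\le 0$ are exactly the coordinate-free counterparts of those two steps: taking $x=a$ gives $\norm{u}^2\le \alpha_i\Delta R_i$ and hence $\norm{u_\perp}^2\le(\alpha_i-\Delta R_i)\Delta R_i$, which encodes ``$c$ lies on the segment of length $h$''; taking $x=\mu^*$ together with Cauchy--Schwarz and $\norm{\mu^*-c}\le D$ encodes the slope estimate. What your version buys: it dispenses with the planar construction and the figure-dependent notions (``slope'', ``lies inside the triangle''), it makes the degenerate cases ($\Delta R_i=0$, $\alpha_i=\Delta R_i$) explicit, and every step is a one-line inner-product computation; what the paper's version buys is geometric intuition for why a small performance increase forces a small remaining gap. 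Two minor caveats, both shared with or inherited from the paper: the constants in \eqref{eq:gap-estimate} require the implicit normalization $\norm{\wopt}_2=1$ (otherwise the denominator becomes $\alpha_i-\Delta R_i/\norm{\wopt}_2^2$), so you were right to flag that assumption explicitly; and your parenthetical claim that $\alpha_i-\Delta R_i$ \emph{equals} $\norm{u_\perp}^2/\Delta R_i$ should read ``is at least'' --- the nonnegativity conclusion you need is unaffected.
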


\begin{proof}[Proof of Proposition \ref{prop:perf-increase}]
    Consider the plane $V \subset \R^{d_r}$ spanned by
    $\mu_r^{\learner, i}, \mu_r^{\teacher, i+1}$ and
    $\mu_r^{\learner, i+1}$ and denote by $\tilde \mu_r$ the unique
    point in $V$ with the properties that
    \begin{enumerate}[(a)]
    \item $\langle \wopt, \tilde \mu_r \rangle = \langle \wopt,
        \mu_r^{\learner, i+1} \rangle$,
    \item $\tilde \mu_r$ lies on the
        same side of the line through $\mu^{\learner, i}$ and
        $\mu^{\teacher, i+1}$ as $\mu_r^{\learner, i+1}$, and 
    \item $\tilde \mu_r, \mu_r^{\teacher, i+1}$ and $\mu_r^{\learner, i}$
        span a right triangle with $\tilde \mu_r$ at the right-angled corner.
    \end{enumerate}
    Note that $\mu_r^{\learner, i+1}$ must lie inside this triangle,
    i.e., on the red line segment in Figure \ref{fig:increase2}:
    Otherwise there would a point on the line segment connecting
    $\mu_r^{\learner, i+1}$ and $\mu_r^{\learner, i}$, and hence in
    $\Omega_r^\learner$ by convexity, which is closer to
    $\mu_r^{\teacher, i+1}$ than $\mu_r^{\learner, i+1}$,
    contradicting the fact that $\mu_r^{\learner, i+1}$ is closest to
    $\mu_r^{\teacher, i+1}$ among all points in
    $\Omega_r^\learner$. Denote by $\tilde \ell$ the line passing
    through $\tilde \mu_r$ and $\mu_r^{\learner, i}$.

    \begin{figure}[h]
        \centering
        \includegraphics[scale=1.1]{./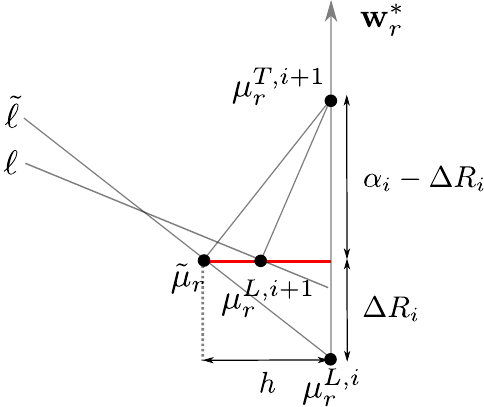}
        \caption{Illustration of the proof of Proposition
            \ref{prop:perf-increase}: The smaller the performance
            increase $\Delta R_i$, the better the upper bound on the
            gap $\overline R_{\Omega} - R(\mu_r^{\learner, i})$.}
        \label{fig:increase2}
    \end{figure}

    The facts that $\Omega_r^\learner$ is convex and that
    $\mu_r^{\learner, i+1} = \argmin_{\mu_r \in \Omega_r^{\learner}}
    \Vert \mu_r^{\teacher, i+1} - \mu_r\Vert_2$ imply that
    $\Omega_r^\learner$ must lie on one side of the hyperplane
    \begin{equation*}
        \mu_r^{\learner, i+1} + (\mu_r^{\teacher, i+1} -
        \mu_r^{\learner, i + 1})^\perp \subset \R^{d_r}.
    \end{equation*}
    Therefore, we can upper bound $\overline R_{\learner}$ in terms of
    the slope $s_\ell$ of the line $\ell$ which arises by
    intersecting that hyperplane with $V$:
    \begin{equation}
        \label{eq:bound-increase}
        \overline R_\learner \leq R(\mu_r^{\learner, i+1}) + D \cdot
        s_\ell
        = R(\mu_r^{\learner, i}) + \Delta R_i + D \cdot s_\ell.
    \end{equation}
    Note that the slope $s_\ell$ is upper bounded by the slope
    $s_{\tilde \ell}$ of $\tilde \ell$. We have
    $s_{\tilde \ell} = \frac{\Delta R_i}{h}$, where $h$ is the length
    of the red line segment in Figure \ref{fig:increase2}, and
    $h = \sqrt{(\alpha_i - \Delta R_i) \Delta R_i}$ by Pythagoras's
    theorem. Using that, we obtain
    \begin{equation}
        \label{eq:slope-bound}
        s_\ell \leq s_{\tilde \ell} = \sqrt{\frac{\Delta R_i}{\alpha_i - \Delta
                R_i}}.
    \end{equation}
    The claimed estimate \eqref{eq:gap-estimate} follows by plugging
    this upper bound for $s$ into \eqref{eq:bound-increase} and
    rearranging.
\end{proof}

\paragraph{Proof of Theorem \ref{thm:convergence}.}

\begin{proof}[Proof of Theorem \ref{thm:convergence}]
    The fact that $R(\mu_r^{\learner, i+1}) > R(\mu_r^{\learner, i})$,
    which is equivalent to $\Delta R_i > 0$, follows immediately from
    Proposition \ref{prop:perf-increase}.

    We now prove the claimed rate of convergence.

    First, using Proposition \ref{prop:perf-increase}, we note that
    the assumption that
    $\overline R_\learner - R(\mu_r^{\learner, i}) > \varepsilon$
    implies that
    \begin{equation}
        \label{eq:lower-bound}
        \varepsilon < \Delta R_i + D \sqrt{\frac{\Delta R_i}{\alpha_i
                - \Delta R_i}}.
    \end{equation}
    Using that, we can conclude that
    \begin{equation}
        \sqrt{\Delta R_i} > \min\{\sqrt{\varepsilon / 2}, \varepsilon
        \sqrt{\alpha_{\min}/(4D^2 + \varepsilon^2)}\}.\label{eq:lower-bound-min}
    \end{equation}
    Indeed, if $\Delta R_i \leq \frac{\varepsilon}{2}$, it follows
    from \eqref{eq:lower-bound} that we must have
    $D \cdot \sqrt{\Delta R_i/(\alpha_{\min} - \Delta R_i)} >
    \frac{\varepsilon}{2}$, which implies
    $\sqrt{ \Delta R_i} > \varepsilon \sqrt{\alpha_{\min} / (4D^2 +
        \varepsilon^2)}$. Since we are interested in the behavior as
    $\varepsilon \to 0$, we assume from now on that $\varepsilon$ is
    so small that
    $\varepsilon \sqrt{\alpha_{\min} / (4D^2 + \varepsilon^2)} <
    \sqrt{\varepsilon/2}$, so that \eqref{eq:lower-bound-min} becomes
    \begin{equation}
        \sqrt{\Delta R_i} > \varepsilon \sqrt{\alpha_{\min} / (4D^2 +
            \varepsilon^2)} =: C_0.\label{eq:lower-bound-C0}
    \end{equation}

    Second, we observe that 
    \begin{equation}
        \sqrt{\alpha_i - \Delta R_i} > \sqrt{\frac{\alpha_{\min}}{2}} =:
        C_1\label{eq:lower-bound-C1}
    \end{equation}
    except in at most
    $N := \frac{2}{\alpha_{\min}} (\max R\vert_\Omega - \min
    R\vert_\Omega)$ teaching steps. To see that, note that if the claimed
    inequality, which is equivalent to
    $\alpha_i - \frac{\alpha_{\min}}{2} > \Delta R_i$, does
    not hold, performance increases by at least
    $\Delta R_i \geq \frac{\alpha_{\min}}{2}$ as
    $\alpha_i > \alpha_{\min}$, and that can happen at most $N$ times.

    The inequalities \eqref{eq:lower-bound-C0} and
    \eqref{eq:lower-bound-C1} together imply that we have
    \begin{equation}
        C_0 \cdot C_1 \leq \sqrt{(\alpha_i - \Delta R_i) \Delta
            R_i}\label{eq:lower-bound-C0C1}
    \end{equation}
    as long as
    $\overline R_\learner - R(\mu_r^{\learner, i}) > \varepsilon$,
    except in at most $N$ teaching steps. Setting
    $C := \frac{1}{C_0 \cdot C_1}$, this is equivalent to
    \begin{equation}
        \label{eq:bound}
        \sqrt{\frac{\Delta R_i}{\alpha_i - \Delta R_i}} \leq C \Delta
        R_i
    \end{equation}
    Plugging \eqref{eq:bound} into the bound \eqref{eq:gap-estimate}
    provided by Proposition \ref{prop:perf-increase}, we obtain the
    estimate
    \begin{equation}
        \label{eq:bound-perf-increase}
        \frac{1}{1 + C D} (\overline R_\learner -
        R(\mu_r^{\learner, i})) \leq \Delta R_i.
    \end{equation}
    We have
    $C = \frac{1}{\varepsilon \alpha_{\min}} \sqrt{2(4D^2 +
        \varepsilon^2)}$, and hence
    \begin{equation}
        \label{eq:bound-constant}
        \frac{1}{1 + CD} = \frac{\varepsilon
            \alpha_{\min}}{\varepsilon \alpha_{\min} + \sqrt{2(4D^2 +
                \varepsilon^2)}\cdot D} \geq \frac{1}{1 + \sqrt{10}}
        \frac{\varepsilon \alpha_{\min}}{D^2} =: \lambda
    \end{equation}
    If we had the estimates \eqref{eq:bound-perf-increase},
    \eqref{eq:bound-constant} for \emph{all} teaching steps, we could
    conclude that the learner performance satisfies
    $R(\mu_r^{\learner, i}) > \overline R_\learner - 2\varepsilon$
    after at most
    $O(\frac{D^2}{\varepsilon \alpha_{\min}}\log
    \frac{D}{\varepsilon})$ teaching steps. One can see that e.g.\ by
    comparing the sequence $R_0, R_1, R_2,\dots$ with the solution
    $R(t)$ of the ordinary differential equation
    $\dot R = \lambda (\overline R_{\learner} - R)$, which satifies
    $\overline R_\learner - R(t) = (\overline R_\learner - R(0))
    \exp(-\lambda t)$. Since the number $N$ of teaching steps for which
    \eqref{eq:bound-perf-increase}, \eqref{eq:bound-constant} do potentially 
    \emph{not} hold is $O(\frac{D}{\alpha_{\min}})$, we can still make 
    this conclusion.
\end{proof}

\section{Background on (discounted) MCE-IRL Problem (Section~\ref{sec:learner})}\label{appendix:background-mceirl}


Our learner models build on the (discounted) Maximum Causal Entropy (MCE) IRL framework~\cite{ziebart2008maximum,ziebart2010modeling,ziebart2013principle,zhou2018mdce}. 
The results below are based on the MDCE-IRL formulation from  \cite{zhou2018mdce}.



\subsection{Primal problem}

In the standard (discounted) MCE-IRL framework, a learning agent aims to identify a policy that  matches the feature expectations of the teacher's demonstrations while simultaneously maximizing the (discounted) causal entropy of the policy, i.e., the learner solves the following  optimization problem:
\begin{align*}
\max_{\pi} \quad &H^\gamma(A_{0:\infty} \Vert S_{0:\infty}) := \sum_{t=0}^{\infty} \gamma^t \E\Big[-\log \pi(a_t \mid s_t) \Big] \\
\textnormal{subject to} \quad
&\mu_{r}(\pi)[i] = \hat{\mu}_{r}(\Xi^{\teacher})[i] \quad \forall i \in \{1, 2, \ldots, d_r\}.
\end{align*}
Here, $\mu_{r}(\pi)[i]$ and $\hat{\mu}_{r}(\Xi^{\teacher})[i]$ denote the scalar values of the $i^\textnormal{th}$ reward feature. The idea is that without any further information beyond the teacher's demonstrations, the most uncertain solution matching the reward feature expectation of those demonstrations should be preferred.

Formulating this as a minimization problem and spelling out all the constraints, we arrive at the following primal:
\begin{align*}
& \underset{\bm{\pi} = \{\pi_{t}\}_{t=0}^{\infty}}{\min} -H^\gamma(A_{0:\infty} \Vert S_{0:\infty})\\
\textnormal{subject to}&\\
& \mu_{r}(\pi_t)[i] = \hat{\mu}_{r}(\Xi^{\mathsf{T}})[i] \quad \forall i \in \{1, 2, \ldots, d_r\}\\
& \pi_{t}(a|s) \geq 0  \quad \forall a \in \mathcal{A}, s \in \mathcal{S}, t \geq 0\\
& \sum_{a \in \mathcal{A}} \pi_{t}(a|s) = 1  \quad \forall s \in \mathcal{S}, t \geq 0\\
& \pi_{t}(a|s) = \pi_{t'}(a|s) \quad \forall a \in \mathcal{A}, s \in \mathcal{S}, t \geq 0, t' \geq 0
\end{align*}
The last condition ensures that the policy $\pi$ is stationary.

\subsection{Lagrangian relaxation}\label{sec:stdmdce-lagrel}
The Lagrangian relaxation optimization formulation of the above primal problem is given by
\begin{align*}
\mathcal{L}(\bm{\pi}, \bm{\lambda}, \bm{\psi}) & = -H^\gamma(A_{0:\infty} \Vert S_{0:\infty}) + \bm{\lambda}^{\dagger}(\hat{\mu}_{r}(\Xi^{\mathsf{T}}) - \mu_{r}(\pi_t)) + \sum_{s,t} \psi_{s,t}(1 - \sum_{a \in \mathcal{A}} \pi_{t}(a|s))\\
\textnormal{subject to } &\\
&\pi_{t}(a|s) \geq 0 \quad  \forall a \in \mathcal{A}, s \in \mathcal{S}, t \geq 0\\
& \pi_{t}(a|s) = \pi_{t'}(a|s) \quad \forall a \in \mathcal{A}, s \in \mathcal{S}, t, t' \geq 0 
\end{align*}
Here, $\bm{\lambda} \in \mathbb{R}^{d_r}$ and $\bm{\psi} = \{\psi_{s,t}\}_{\forall s_t}$. Also, $\dagger$ is the transpose operator defined for vectors.\\

\emph{Remark.} The Lagrangian relaxation of the optimization problem is not convex in the problem variables because of the term $\bm{\lambda}^{\dagger}(\hat{\mu}_{r}(\Xi^\mathsf{T}) - \mu_{r}(\pi_{t}))$ in the objective function, which is not convex in the variables $\pi_{t}$. However, it can be shown that strong duality holds for both its dual and primal formulations (\cite{zhou2018mdce}). The dual formulation is described in Section \ref{sec:stdmdce-dual}.

\subsection{Parametric form of the policy}\label{sec:learner-stdpolicy}
For a given $\bm{\lambda}$, the optimal policy $\pi_{\bm{\lambda}}^{\soft}(a|s)$ is given by
\begin{align*}
\pi_{\bm{\lambda}}^{\soft}(a|s) & = \frac{\exp(Q_{\bm{\lambda}}^{\soft}(s,a))}{\exp(V_{\bm{\lambda}}^{\soft}(s))}
\end{align*}
where the quantities are defined recursively as follows:
\begin{align*}
Q_{\bm{\lambda}}^{\soft}(s,a) & = \bm{\lambda}^{\dagger} \mu_{r}({\pi_{\bm{\lambda}}^{\soft}}(a|s) )
+ \gamma  \sum_{s' \in \mathcal{S}} T(s'|s,a) V_{\bm{\lambda}}^{\soft}(s')\\
V_{\bm{\lambda}}^{\soft}(s) & = \text{ log } \sum_{a \in \mathcal{A}} \exp(Q_{\bm{\lambda}}^{\soft}(s,a) ) 
\end{align*}
This is shown by taking the derivative of the Lagrangian, $\mathcal{L}(\bm{\pi}, \bm{\lambda}, \bm{\psi})$ w.r.t.\ the primal variables $\pi_{t}$ and equating it to 0,  i.e.,
\begin{align*}
\frac{\partial L(\{\pi_{t}\}_{t=0}^{\infty}, \bm{\lambda}, \bm{\psi})}{\partial \pi_{t}} & = 0.
\end{align*}
For a given $\bm{\lambda}$, the corresponding softmax policy can be obtained by \emph{Soft-Value-Iteration} procedure (see  \cite[Algorithm.~9.1]{ziebart2010modeling}, \cite{zhou2018mdce}).

\subsection{Dual problem}\label{sec:stdmdce-dual}
For any given $\bm{\lambda}, \bm{\psi}$, let $g(\bm{\lambda}, \bm{\psi})$ be the optimal value for the optimization problem defined by the Lagrangian relaxation problem in Section \ref{sec:stdmdce-lagrel}. As \textit{strong duality} holds for the (discounted) MCE-IRL problem and its dual counter part, we solve only the following concave dual problem: 
\begin{align*}
&\underset{\bm{\lambda} \in \mathbb{R}^{d_r}, \psi_{s,t} \in \mathbb{R}}{\mathrm{maximize}} \quad g(\bm{\lambda}, \bm{\psi})
\end{align*}

\subsection{Gradients for the dual variables}
As the dual problem is concave, it can be solved using gradient ascent. The gradients of the dual function described in Section \ref{sec:stdmdce-dual} are given by:
\begin{align*}
\nabla_{\bm{\lambda}}\quad g & =  \hat{\mu}_{r}(\Xi^{\mathsf{T}})- \mu_{r}(\pi_{\bm{\lambda}}^{\soft})\\
\nabla_{\psi_{s,t}}\quad g & = 1 - \sum_{a \in \mathcal{A}} \pi_{\bm{\lambda}}^{\soft}(a|s)
\end{align*}
Here $\pi_{\bm{\lambda}}^{\soft}$ is the parametric softmax policy described above. The second condition is automatically satisfied because $\pi_{\bm{\lambda}}^{\soft}$ is a probability distribution.

The gradient update rule to compute the optimal $\bm{\lambda}$ is:
\begin{align*}
\bm{\lambda}_{\text{next}} \leftarrow \bm{\lambda} - \eta\cdot\big(\mu_{r}(\pi_{\bm{\lambda}}^{\soft}) - \hat{\mu}_{r}(\Xi^{\mathsf{T}})\big)
\end{align*}
where $\eta$ is the learning rate.

\section{Details of  (discounted) MCE-IRL Problem with Preferences  (Section~\ref{sec:learner:soft})}\label{appendix:mceirl-with-preferences}

Here we present the background of the learner model described in Section \ref{sec:learner:soft}. In this setting, the learner's preferences are modeled as linear soft constraints with L1 penalties. We consider the minimization variant of the problem. The results in this section follow directly from the analysis of Maximum Entropy Models under different constraints, as presented in \cite{tsujii2005maxent,schap2007maxent} when applied to (discounted) MCE-IRL problem \cite{ziebart2013principle,zhou2018mdce}. For brevity, redundant details of the derivations are omitted.\\
The final policy of the learner is given by $\pi_{\bm{\lambda}}^{\text{\soft}}$ and is defined in Section \ref{sec:learner-policy}.

\subsection{Primal problem}\label{sec:prefmdce-primal}
The primal problem is given by
\begin{align*}
& \underset{\bm{\pi} = \{\pi_t\}_{t=0}^{\infty};\ \delta^{\soft,\low}_{r},\ \delta^{\soft,\up}_{r} ,\ \delta^{\soft,\up}_{c} \geq 0}{\min} -H^{\gamma}(A_{0:\infty}|| S_{0:\infty}) + \sum_{i=1}^{d_r}C_r\cdot(\delta_{r}^{\soft,\low}[i] + \delta_{r}^{\soft,\up}[i]) + \sum_{j=1}^{d_c}C_c\cdot \delta_{c}^{\soft,\up}[j]\\
&\textnormal{subject to}\\
& \qquad \hat{\mu}_{r}(\Xi^{\mathsf{T}})[i] - \mu_{r}(\pi_t)[i] \leq \delta_{r}^{\soft,\low}[i] \quad \forall i \in \{1,2, \ldots, d_r\}\\
&\qquad \mu_{r}(\pi_t)[i] - \hat{\mu}_{r}(\Xi^{\mathsf{T}})[i]  \leq \delta_{r}^{\soft,\up}[i] \quad \forall i \in \{1,2, \ldots, d_r\}\\
& \qquad \mu_{c}(\pi_t)[j] \leq \delta_{c}^{\hard}[j] + \delta_{c}^{\soft,\up}[j] \quad  \forall j \in \{1,2, \ldots, d_c\}
\end{align*}
Here we have $\delta_{r}^{\soft,\low}, \delta_{r}^{\soft,\up} \in \mathbb{R}^{d_r}$ and $\delta_{c}^{\soft,\up} \in \mathbb{R}^{d_c}$ as the primal optimization slack variables with the constraint that $\delta^{\soft,\low}_{r}, \delta^{\soft,\up}_{r},\delta_{c}^{\soft,\up} \geq 0$. We also have $C_r > 0, C_c > 0$. $\delta_{c}^{\hard} \in \mathbb{R}^{d_c}$ is a given constant vector. 

\emph{Remark.}   \low~and \up~in the superscripts of dual variables represent whether they are variables for lower bound constraints or upper bound constraints.

\subsection{Lagrangian relaxation}\label{sec:prefmdce-lagrel}
The Lagrangian relaxation optimization formulation of the primal problem described in Section \ref{sec:prefmdce-primal} is given by
\begin{align*}
\mathcal{L}(\bm{\pi}, \delta^{\soft,\low}_{r},\delta^{\soft,\up}_{r},\delta_{c}^{\soft,\up},\bm{\lambda}, \bm{\psi}) & = -H^{\gamma}(A_{0:\infty}, S_{0:\infty}) + (\bm{\alpha}^{\low} - \bm{\alpha}^{\up})^{\dagger}(\hat{\mu}_{r}(\Xi^{\mathsf{T}}) - \mu_{r}(\pi_t))\\
& + \bm{\beta}^{\dagger}\mu_{c}(\pi_t)\\
& + \sum_{s,t} \psi_{s,t}(1 - \sum_{a \in \mathcal{A}} \pi_{t}(a|s)) 
-(\bm{\alpha}^{\low})^{\dagger} \delta^{\soft,\low}_{r} - (\bm{\alpha}^{\up})^{\dagger} \delta^{\soft,\up}_{r}\\ &-\bm{\beta}^{\dagger} \delta_{c}^{\soft,\up} -\bm{\beta}^{\dagger} \delta_{c}^{\hard} \\
& - (\bm{\rho}^{\low})^{\dagger} \delta_{r}^{\soft,\low} - (\bm{\rho}^{\up})^{\dagger} \delta_{r}^{\soft,\up}\\
& -\bm{\sigma}^{\dagger} \delta_{c}^{\soft,\up}\\
&+\sum_{i=1}^{d_r}C_r\cdot(\delta_{r}^{\soft,\low}[i] + \delta_{r}^{\soft,\up}[i]) + \sum_{j=1}^{d_c}C_c\cdot \delta_{c}^{\soft,\up}[j]\\
\textnormal{subject to } & \\
&\pi_{t}(a|s) \geq 0 \quad  \forall a \in \mathcal{A}, s \in \mathcal{S}, t \geq 0\\
& \pi_{t}(a|s) = \pi_{t^{'}}(a|s) \quad  \forall a \in \mathcal{A}, s \in \mathcal{S}, t, t^{'} \geq 0 
\end{align*}
Here, $\bm{\alpha}^{\low}, \bm{\alpha}^{\up}, \bm{\rho}^{low}, \bm{\rho}^{up} \in \mathbb{R}^{d_r}$, and $\bm{\beta}, \bm{\sigma} \in \mathbb{R}^{d_c}$. We also have non-negativity constraints on the dual variables: $\bm{\alpha}^{\low}, \bm{\alpha}^{\up}, \bm{\beta}, \bm{\rho}^{\low}, \bm{\rho}^{\up}, \bm{\sigma} \geq 0$. A few additional notes:
\begin{itemize}
	\item For convenience, we will denote the group of dual variables as $\bm{\lambda} := \{\bm{\alpha}^{\low}, \bm{\alpha}^{\up}, \bm{\beta}, \bm{\rho}^{\low}, \bm{\rho}^{\up}, \bm{\sigma}\}$
	\item The reward parameter $\bm{w_{\bm{\lambda}}} = [(\bm{\alpha}^{\low} - \bm{\alpha}^{\up})^{\dagger}, -\bm{\beta}^{\dagger}]^{\dagger}$ is used to define the learner's reward function
	$R_{\bm{\lambda}}(s) = \langle \bm{w}_{\bm{\lambda}}, \phi(s) \rangle$.
	\item $\dagger$ is the transpose operator, defined for vectors.
\end{itemize}

\subsection{Parametric form of the policy}\label{sec:learner-policy}
For a given, $\bm{\lambda} := \{ \bm{\alpha}^{\low}, \bm{\alpha}^{\up}, \bm{\beta}, \bm{\rho}^{\low}, \bm{\rho}^{\up}, \bm{\sigma} \}$, the optimal policy $\pi_{\bm{\lambda}}^{\soft}(a|s)$ is given by
\begin{align*}
\pi_{\bm{\lambda}}^{\soft}(a|s) & = \frac{\exp(Q_{\bm{\lambda}}^{\soft}(s,a))}{\exp(V_{\bm{\lambda}}^{\soft}(s))}
\end{align*}
where the quantities are defined recursively as follows:
\begin{align*}
Q_{\bm{\lambda}}^{\soft}(s,a) & = (\bm{\alpha}_{low} - \bm{\alpha}_{up})^{\dagger} \mu_{r}({\pi_{\bm{\lambda}}^{\soft}}(a|s) )
- \bm{\beta}^{\dagger} \mu_{c}({\pi_{\bm{\lambda}}^{\soft}}(a|s) )+ \gamma  \sum_{s^{'} \in \mathcal{S}} T(s^{'}|s,a) V_{\bm{\lambda}}^{\soft}(s^{'})\\
V_{\bm{\lambda}}^{\soft}(s) & = \text{ log }(\sum_{a \in \mathcal{A}} \exp(Q_{\bm{\lambda}}^{\soft}(s,a) ) )
\end{align*}
This is shown by taking the derivative of the Lagrangian, $\mathcal{L}(\bm{\pi}, \bm{\lambda}, \bm{\psi})$ w.r.t the primal variables, $\pi_{t}$ and equating it to 0. i.e.
\begin{align*}
\frac{\partial L(\{\pi_{t}\}_{t=0}^{\infty}, \bm{\lambda}, \bm{\psi})}{\partial \pi_{t}} & = 0
\end{align*}

\subsection{Updated Lagrangian}
We find the partial derivatives of the Lagrangian defined in Section \ref{sec:prefmdce-lagrel} w.r.t all the primal variables, 
$\delta^{\soft,\low}_{r}, \delta^{\soft,\up}_{r}, \delta_{c}^{\soft,\up}$:
\begin{align*}
\frac{\partial \mathcal{L}}{\partial \delta^{\soft,\low}_{r}[i]} & = 0\\
\Rightarrow \alpha^{\low}[i] & = C_r - \rho^{\low}[i]\\
\text{Also, } \frac{\partial \mathcal{L}}{\partial \delta^{\soft,\up}_{r}} & = 0\\
\Rightarrow \alpha^{\up}[i] & =C_r - \rho^{\up}[i]\\
\text{And, }  \frac{\partial \mathcal{L}}{\partial \delta^{\soft,\up}_{c}} & = 0\\
\Rightarrow \beta[i] & =C_r - \sigma[i]\\
\end{align*}
The dual variables satisfy $\bm{\sigma}, \bm{\rho}^{\low}, \bm{\rho}^{\up} \geq 0$. Hence, the above conditions translate into the following constraints on the set of dual variables, $\bm{\alpha}^{\low}, \bm{\alpha}^{\up}, \bm{\beta}$:
\begin{align*}
0 \leq {\alpha}^{\low}[i] \leq C_r \quad \forall i \in \{1,2, \ldots, d_r\} \\
0 \leq {\alpha}^{\up}[i] \leq C_r \quad \forall i \in \{1,2, \ldots, d_r\}\\
0 \leq {\beta}[j] \leq C_c \quad \forall j \in \{1,2, \ldots, d_c\}
\end{align*}
The updated Lagrangian now has these additional constraints and is given by:
\begin{align*}
\mathcal{L}(\bm{\pi}, \delta^{\soft,\low}_{r},\delta^{\soft,\up}_{r},\delta_{c}^{\soft,\up},\bm{\lambda}, \bm{\psi}) & = -H^{\gamma}(A_{0:\infty}, S_{0:\infty}) + (\bm{\alpha}^{\low} - \bm{\alpha}^{\up})^{\dagger}(\hat{\mu}_{r}(\Xi^{\mathsf{T}}) - \mu_{r}(\pi_t)) + \bm{\beta}^{\dagger}\mu_{c}(\pi_t)\\
& + \sum_{s,t} \psi_{s,t}(1 - \sum_{a \in \mathcal{A}} \pi_{t}(a|s)) 
-(\bm{\alpha}^{\low})^{\dagger} \delta^{\soft,\low}_{r} - (\bm{\alpha}^{up})^{\dagger} \delta^{\soft,\up}_{r}\\ &-\bm{\beta}^{\dagger} \delta_{c}^{\soft,\up} -\bm{\beta}^{\dagger} \delta_{c}^{\hard} \\
& - (\bm{\rho}^{\low})^{\dagger} \delta_{r}^{\soft,\low} - (\bm{\rho}^{\up})^{\dagger} \delta_{r}^{\soft,\up}\\
& -\bm{\sigma}^{\dagger} \delta_{c}^{\soft,\up}\\
&+\sum_{i=1}^{d_r}C_r\cdot(\delta_{r}^{\soft,\low}[i] + \delta_{r}^{\soft,\up}[i]) + \sum_{j=1}^{d_c}C_c\cdot \delta_{c}^{\soft,\up}[j]\\
\textnormal{subject to } &\\
&\pi_{t}(a|s) \geq 0 \quad  \forall a \in \mathcal{A}, s \in \mathcal{S}, t \geq 0\\
& \pi_{t}(a|s) = \pi_{t^{'}}(a|s) \quad  \forall a \in \mathcal{A}, s \in \mathcal{S}, t, t^{'} \geq 0 \\
&0 \leq {\alpha}^{\low}[i] \leq C_r \quad \forall i \in \{1,2, \ldots, d_r\}\\
&0 \leq {\alpha}^{\up}[i] \leq C_r \quad \forall i \in \{1,2, \ldots, d_r\}\\
&0 \leq {\beta}[j] \leq C_c \quad \forall j \in \{1,2, \ldots, d_c\}
\end{align*}
The set of dual variables becomes $\bm{\lambda} := \{\bm{\alpha}^{\low}, \bm{\alpha}^{\up}, \bm{\beta}\}$ and $\bm{\psi} = \{\psi_{s,t}\}_{\forall s_t}$.

\subsection{Dual problem}\label{appendix:mceirl-with-preferences-dualproblem}
For any given $\bm{\lambda, \psi}$, let $g(\bm{\lambda, \psi})$ be the optimal value for the Lagrangian relaxation problem. Strong Duality holds for both our primal and dual formulations, and the dual optimal policy is also optimal for the primal formulation. Hence, we solve the \textit{concave} dual problem, given by
\begin{align*}
&\underset{\bm{\alpha}^{\low},\bm{\alpha}^{\up} \in \mathbb{R}^{d_r}, \bm{\beta} \in \mathbb{R}^{d_c},  \psi_{s,t} \in \mathbb{R}}{\textnormal{maximize}} \text{ }g(\bm{\lambda}, \bm{\psi})\\
\textnormal{subject to } &\\
&\quad 0 \leq \bm{\alpha}^{\low} \leq C_r \\
&\quad 0 \leq \bm{\alpha}^{\up} \leq C_r \\
&\quad 0\leq \bm{\beta} \leq C_c \\
\end{align*}
where $\bm{\lambda} := \{\bm{\alpha}^{\low}, \bm{\alpha}^{\up}, \bm{\beta}\}$.

\subsection{Gradients for the dual problem}
 As the dual problem is concave, it can be solved using gradient ascent.\\
 Note that,
\begin{align*}
\nabla_{\psi_{s,t}}g & = 1 - \sum_{a \in \mathcal{A}} \pi_{\bm{\lambda}}^{\soft}(a|s)
\end{align*}
Here $\pi_{\bm{\lambda}}^{\soft}$ is the parametric softmax policy described above. This condition is automatically satisfied because $\pi_{\bm{\lambda}}^{\soft}$ is a probability distribution.
For the remaining dual variables, we have the following gradients:
\begin{align*}
\nabla_{\bm{\alpha}^{\low}}\text{ }g & = \hat{\mu}_{r}(\Xi^{\mathsf{T}})-\mu_{r}(\pi_{\bm{\lambda}}^{\soft})\\
\nabla_{\bm{\alpha}^{\up}}\quad g & =  \mu_{r}(\pi_{\bm{\lambda}}^{\soft})-\hat{\mu}_{r}(\Xi^{\mathsf{T}})\\
\nabla_{\bm{\beta}}\text{ }g & = \mu_{c}(\pi_{\bm{\lambda}}^{\soft})
\end{align*}

The (projected) gradient update rules to compute the optimal value of the dual variables $(\bm{\alpha}^{\low}, \bm{\alpha}^{\up}, \bm{\beta})$ are given by the following:
\begin{align*}
\bm{\alpha}^{\low}_{\text{next}} & \leftarrow \bm{\alpha}^{\low} - \eta\cdot(\mu_{r}(\pi_{\bm{\lambda}}^{\soft})-\hat{\mu}_{r}(\Xi^{\mathsf{T}}))\\
{\alpha}^{\low}_{\text{next}}[i] & \leftarrow \max (0, {\alpha}^{\low}_{\text{next}}[i]) \quad \forall i \in \{1,2, \ldots, d_r\}\\
{\alpha}^{\low}_{\text{next}}[i] & \leftarrow \min (C_r, {\alpha}^{\low}_{\text{next}}[i]) \quad \forall i \in \{1,2, \ldots, d_r\}
\end{align*}
\begin{align*}
\bm{\alpha}^{\up}_{\text{next}} & \leftarrow \bm{\alpha}^{\up} - \eta\cdot(\hat{\mu}_{r}(\Xi^{\mathsf{T}}) - \mu_{r}(\pi_{\bm{\lambda}}^{\soft}))\\
{\alpha}^{\up}_{\text{next}}[i] & \leftarrow \max (0, {\alpha}^{\up}_{\text{next}}[i]) \quad \forall i \in \{1,2, \ldots, d_r\}\\
{\alpha}^{\up}_{\text{next}}[i] & \leftarrow \min (C_r, {\alpha}^{\up}_{\text{next}}[i])\quad \forall i \in \{1,2, \ldots, d_r\}
\end{align*}
\begin{align*}
\bm{\beta}_{\text{next}} & \leftarrow \bm{\beta} - \eta\cdot(-\mu_{c}(\pi_{\bm{\lambda}}^{\soft}))\\
{\beta}_{\text{next}}[j] & \leftarrow \max (0, {\beta}_{\text{next}}[j]) \quad \forall j \in \{1,2, \ldots, d_c\}\\
{\beta}_{\text{next}[j]} & \leftarrow \min (C_{c}, {\beta}_{\text{next}}[j]) \quad \forall j \in \{1,2, \ldots, d_c\}
\end{align*}
where $\eta$ is the learning rate.

\section{LP Formulation for the Teacher \algAwareCMDP (Section~\ref{sec:teacher.fullknowledge.special})}\label{appendix:lp}

The problem of finding optimal learner-aware teaching demonstrations for the learner in Section~\ref{sec:learner:hard} with linear preferences can be formulated as the following linear program (based on the linear programming formulation for solving MDPs~\cite{de1960problemes}):
  \begin{align}  
     \max_{z} \quad & \sum_s \sum_a z(s,a) \langle \wopt, \phi_r(s) \rangle \\
     \textnormal{s.t.} \quad & 
      \sum_a z(s',a) = (1-\gamma) P_0(s') + \gamma \sum_s \sum_a T(s' | s, a) z(s, a) \quad \forall s' \\
      &z(s,a) \geq 0 \quad \forall s,a \\
      & \sum_s \sum_a z(s,a) \phi_c(s)[j] \leq \delta^{\hard}_c[j] \quad \forall j \in \{ 1, 2, \ldots, d_c \} \label{eq:pref-constr-lp}
  \end{align}
Here $z$ is a vector of discounted state-action frequencies and $z(s,a)$ refers to state-action frequency for state $s$ and action $a$.
The constraints in~\eqref{eq:pref-constr-lp} are the linear preference constraints.
From the optimal solution of the LP, an optimal stochastic policy can be extracted by
\begin{align}
      \pi(s, a) := \frac{z(s,a)}{\sum_{a'} z(s,a')}.
\end{align}

\label{appendix:bi-level}

\section{Bi-Level Optimization Approach (Section \ref{sec:teacher.fullknowledge.generic})}
\label{appendix:bi-level}

We only show the formalism for the most general bi-level problem for learners with linear preferences.


\subsection{Using Dual (discounted) MCE-IRL formulation for the learner model in Section~\ref{sec:learner:soft}}

The basic bi-level optimization problem that we aim to solve is the following:
\begin{align*}
    \max_{\pi^\teacher} &\quad R(\pi^\learner) \\
       \textnormal{subject to} &\quad \pi^\learner \in \arg \max_{\pi} \textnormal{IRL}(\pi,\mu(\pi^\teacher)).
\end{align*}

We will replace the lower-level problem, i.e., $\arg \max_{\pi} \textnormal{IRL}(\pi,\mu(\pi^\teacher))$ with its Karush-Kuhn-Tucker conditions \cite{boyd2004convex,sinha2018review}.  The lower-level problem in its dual formulation is given in Appendix~\ref{appendix:mceirl-with-preferences-dualproblem}.

Omitting details and replacing $R(\pi_{\bm{\lambda}}) := \langle \wopt, \mu_r(\pi_{\bm{\lambda}})\rangle$, this yields problems of  the following form:
\begin{align*}
    \max_{\bm{\lambda}} &\quad \langle \wopt, \mu_r(\pi_{\bm{\lambda}})\rangle  \\
       \textnormal{subject to:}&\\
            &\quad 0 \leq \bm{\alpha}^{\low} \leq C_r \\
             &\quad 0 \leq \bm{\alpha}^{\up} \leq C_r \\
            &\quad 0\leq \bm{\beta} \leq C_c \\
            &\quad \mu_c(\pi_{\bm{\lambda}}) \leq (\geq) \delta^{\hard}_c
\end{align*}
where $\bm{\lambda} := \{\bm{\alpha}^{\low}, \bm{\alpha}^{\up}, \bm{\beta}\}$.  Here $\pi_{\bm{\lambda}}$ corresponds to a \emph{softmax} policy with a reward function $R_{\bm{\lambda}}(s) = \langle \bm{w}_{\bm{\lambda}}, \phi(s) \rangle$ for  $\bm{w}_{\bm{\lambda}} = [(\bm{\alpha}^{\low}-\bm{\alpha}^{\up})^\dagger, -\bm{\beta}^\dagger]^\dagger$. Thus, finding optimal demonstrations means optimization over \emph{softmax} teaching policies while respecting the learner's preferences.

\subsubsection{Optimal solution}
The cases of the above problem we can observe have to be solved separately and the best solution must be picked. That is, we 
find the following two solutions: (step i) $\bm{\lambda}^*_1$, and (step ii) $\bm{\lambda}^*_2$.  Then pick the best $\bm{\lambda}^*$ in (step iii):\\\\
\textbf{Step i: $\bm{\lambda}^*_1$}
Compute optimal parameters $\bm{\lambda}^*_1$ by solving the following problem:
\begin{align*}
    \max_{\lambda} &\quad  \langle \wopt, \mu_r(\pi_{\bm{\lambda}})\rangle \\
       \textnormal{subject to:}&\\
            &\quad 0 \leq \bm{\alpha}^{\low} \leq C_r \\
             &\quad 0 \leq \bm{\alpha}^{\up} \leq C_r \\
            &\quad 0\leq \bm{\beta} \leq C_c \\
            &\quad \mu_c(\pi_{\bm{\lambda}}) \leq \delta^{\hard}_c
\end{align*}
\\\\
\textbf{Step ii: $\bm{\lambda}^*_2$}
Compute optimal parameters $\bm{\lambda}^*_2$ by solving the following problem:
\begin{align}
    \max_{\bm{\lambda}} &\quad \langle \wopt, \mu_r(\pi_{\bm{\lambda}}) \rangle  \\
       \textnormal{subject to:}&\\
            &\quad 0 \leq \bm{\alpha}^{\low} \leq C_r \\
             &\quad 0 \leq \bm{\alpha}^{\up} \leq C_r \\
            &\quad \bm{\beta} = C_c \\
            &\quad \mu_c(\pi_{\bm{\lambda}}) \geq \delta^{\hard}_c
\end{align}
\textbf{Step iii: $\bm{\lambda}^*$}
Pick the best solution as
\begin{align*}
    \bm{\lambda}^* = \arg \max_{\bm{\lambda} \in \{\bm{\lambda}^*_1, \bm{\lambda}^*_2 \}} \langle \wopt, \mu_r(\pi_{\bm{\lambda}}) \rangle
\end{align*}
This provides the optimal policy for the teacher. The teacher then computes feature expectation of this policy and provide it to the learner.

\subsection{Solving the above problem}
\label{sec:bi-level-solving}

We adopt a variant of the Frank-Wolfe algorithm~\cite{jaggi2013} to solve the problems of the form:
\begin{align}
    \max_{\bm{\lambda}} &\quad R(\pi_{\bm{\lambda}}) := \langle \wopt, \mu_r(\pi_{\bm{\lambda}}) \rangle \label{eq:objective-bilevel}\\
       \textnormal{subject to:}&\\
            &\quad 0 \leq \bm{\alpha}^{\low} \leq C_r \\
             &\quad 0 \leq \bm{\alpha}^{\up} \leq C_r \\
            &\quad 0\leq \bm{\beta} \leq C_c \\
            &\quad \mu_c(\pi_{\bm{\lambda}}) \leq (\geq) \delta^{\hard}_c
\end{align}
In particular, we take the following steps to optimize the teaching policy $\pi_{\bm{\lambda}}$:
\begin{enumerate}
    \item \emph{Initialization.} Find a feasible starting point $\bm{\lambda}_0$
    \item \emph{Optimization.} For $t=1, 2, \ldots$
      \begin{itemize}
          \item Compute the gradient $\bm{g}_t = [\nabla_{\bm{\lambda}} R(\pi_{\bm{\lambda}})](\bm{\lambda}_{t-1})$ of the objective at $\bm{\lambda}_{t-1}$. In experiments we approximate the gradient using finite-differences.
          \item Linearize the constraints $\mu_c(\pi_{\bm{\lambda}}) \leq (\geq) \delta^{\hard}_c$ at $\bm{\lambda}_{t-1}$ as $\bm{b}_{t} + \bm{A}_{t} (\bm{\lambda} - \bm{\lambda}_{t-1}) \leq (\geq) \delta^{\hard}_c$, where $\bm{b}_{t}=\mu_c(\pi_{\bm{\lambda}_{t-1}})$ and $\bm{A}_t = [\nabla_{\bm{\lambda}} \mu_c(\pi_{\bm{\lambda}})](\bm{\lambda}_{t-1})$. Again, we employ finite-differences to approximate this linearization. Clearly, we can reuse computation from the gradient estimation of the objective here to reduce computational demands.
          \item Solve the direction-finding subproblem (a linear problem):
            \begin{align*}
    \max_{\bm{\gamma}} &\quad \langle \bm{\gamma}, \bm{g}_t \rangle \label{eq:objective-bilevel}\\
       \textnormal{subject to:}&\\
            &\quad 0 \leq \bm{\alpha}^{\low} \leq C_r \\
             &\quad 0 \leq \bm{\alpha}^{\up} \leq C_r \\
            &\quad 0\leq \bm{\beta} \leq C_c \\
            &\quad \bm{b}_{t} + \bm{A}_{t-1} (\bm{\lambda} - \bm{\lambda}_{t-1}) \leq (\geq) \delta^{\hard}_c
            \end{align*}
         with optimal solution $\bm{\gamma}_t^*$. Assuming that the linear approximation of the constraints is accurate locally, the directional vector $\bm{d}_t = \bm{\gamma}_t^* - \bm{\lambda}_{t-1}$ is an ascent direction.
         
     \item Perform a line-search from $\bm{\lambda}_{t-1}$ to $\bm{\gamma}_t^*$ and let $\bm{\lambda}_t$ be the point that maximizes the line search.
     
     \item Upon convergence, terminate the For loop.
    \end{itemize}
\end{enumerate}

Upon convergence of the algorithm, the teacher can use the final $\bm{\lambda}_t$ for teaching.

\emph{Remark.} Observe that the above algorithm would reduce to the standard Frank-Wolfe algorithm with line-search in the case of linear inequalities only.

}
}
{}
\end{document}